\newtheorem{theorem}{Theorem}
\newtheorem{proposition}{Proposition}
\newtheorem{definition}{Definition}
\newtheorem{example}{Example}
\newcommand{\norm}[1]{\left\lVert#1\right\rVert}
\def\M{{\mathbf{M}}}
\def\MG{\mathbf{M_G}}
\def\MU{\mathbf{M_{G}^{\lambda}}}
\def\MO{\mathbf{M_{G}^{OPT}}}
\def\MGZ{{\mathbf{M_{G}^{0}}}}
\def\X{{\mathcal{X}}}
\def\Y{{\mathcal{Y}}}
\def\K{{\mathcal{K}}}
\def\eqref#1{equation~\ref{#1}}
\def\1{\bm{1}}
\DeclareMathAlphabet{\mathsfit}{\encodingdefault}{\sfdefault}{m}{sl}
\SetMathAlphabet{\mathsfit}{bold}{\encodingdefault}{\sfdefault}{bx}{n}
\newcommand{\R}{\mathbb{R}}
\DeclareMathOperator*{\argmax}{arg\,max}
\DeclareMathOperator*{\argmin}{arg\,min}
\theoremstyle{plain}
\theoremstyle{definition}
\theoremstyle{remark}
\title{Efficient Equivariant Transfer Learning from Pretrained Models}
\author{%
  Sourya Basu~\thanks{Equal contribution.} \\
  University of Illinois at Urbana-Champaign \\
  \And
  Pulkit Katdare$~^{*}$\\
  University of Illinois at Urbana-Champaign\\
  \AND
  Prasanna Sattigeri \\
  IBM Research \\
  \And
  Vijil Chenthamarakshan \\
  IBM Research \\
  \And
  Katherine Driggs-Campbell \\
  University of Illinois at Urbana-Champaign \\
  \And
  Payel Das \\
  IBM Research\\
  \And
  Lav R. Varshney \\
  University of Illinois at Urbana-Champaign \\
}
\begin{document}

\maketitle

\begin{abstract}
Efficient transfer learning algorithms are key to the success of foundation models on diverse downstream tasks even with limited data. Recent works of \cite{basu2022equi} and \cite{kaba2022equivariance} propose group averaging (\textit{equitune}) and optimization-based methods, respectively, over features from group-transformed inputs to obtain equivariant outputs from non-equivariant neural networks. 
While \cite{kaba2022equivariance} are only concerned with training from scratch, we find that equitune performs poorly on equivariant zero-shot tasks despite good finetuning results.
We hypothesize that this is because pretrained models provide better quality features for certain transformations than others and simply averaging them is deleterious. Hence, we propose $\lambda$-\textit{equitune} that averages the features using \textit{importance weights}, $\lambda$s. These weights are learned directly from the data using a small neural network, leading to excellent zero-shot and finetuned results that outperform equitune. Further, we prove that $\lambda$-equitune is equivariant and a universal approximator of equivariant functions. Additionally, we show that the method of \cite{kaba2022equivariance} used with appropriate loss functions, which we call \textit{equizero}, also gives excellent zero-shot and finetuned performance. Both equitune and equizero are special cases of $\lambda$-equitune. To show the simplicity and generality of our method, we validate on a wide range of diverse applications and models such as 1) image classification using CLIP, 2) deep Q-learning, 3) fairness in natural language generation (NLG), 4) compositional generalization in languages, and 5) image classification using pretrained CNNs such as Resnet and Alexnet.

\end{abstract}

\section{Introduction}\label{sec:intro}

Group-equivariant deep learning leverages group equivariance as an inductive bias to design efficient and reliable neural networks. Popular examples include convolutional neural networks (CNNs) equivariant to translations \citep{lecun1989backpropagation}, group convolutional neural networks (GCNNs) equivariant to general discrete groups~\citep{cohen2016group}, and recently Alphafold2 equivariant to 3D rotations \citep{jumper2021highly}. But these methods cannot leverage pretrained models.

With the increase in open sourced large pretrained models, it is now crucial to develop efficient transfer learning algorithms that can leverage group equivariance. \citet{basu2022equi} proposed equitune, an equivariant finetuning method that uses group averaging over features extracted from pretrained models. Several other methods were proposed to get equivariant output from non-equivariant backbone architectures, e.g.\ some form of averaging~\citep{puny2021frame, atzmon2022frame} or optimization over certain proxy loss functions~\citep{kaba2022equivariance}. But these latter methods were originally designed for training from scratch and not much is known about their finetuning abilities.

Here, we find that equitune performs poorly on zero-shot tasks. Our first main contribution is to show that the optimization method of \citet{kaba2022equivariance} when used with appropriate proxy loss functions provides excellent zero-shot and finetuning performance. We call this method \textit{equizero}. 

The results from equizero suggest that pretrained models provide better quality features for
some group transformations than others. Thus, we propose $\lambda$-equitune, which learns \textit{importance weights} directly from the data and uses them to perform a weighted group averaging. We show $\lambda$-equitune outperforms equitune and is competitive with equizero for zero-shot tasks. Moreover, for finetuning, $\lambda$-equitune often outperforms both equitune and equizero. This constitutes our second main contribution.

\begin{figure}
    \centering
    \includegraphics[width=0.99\textwidth]{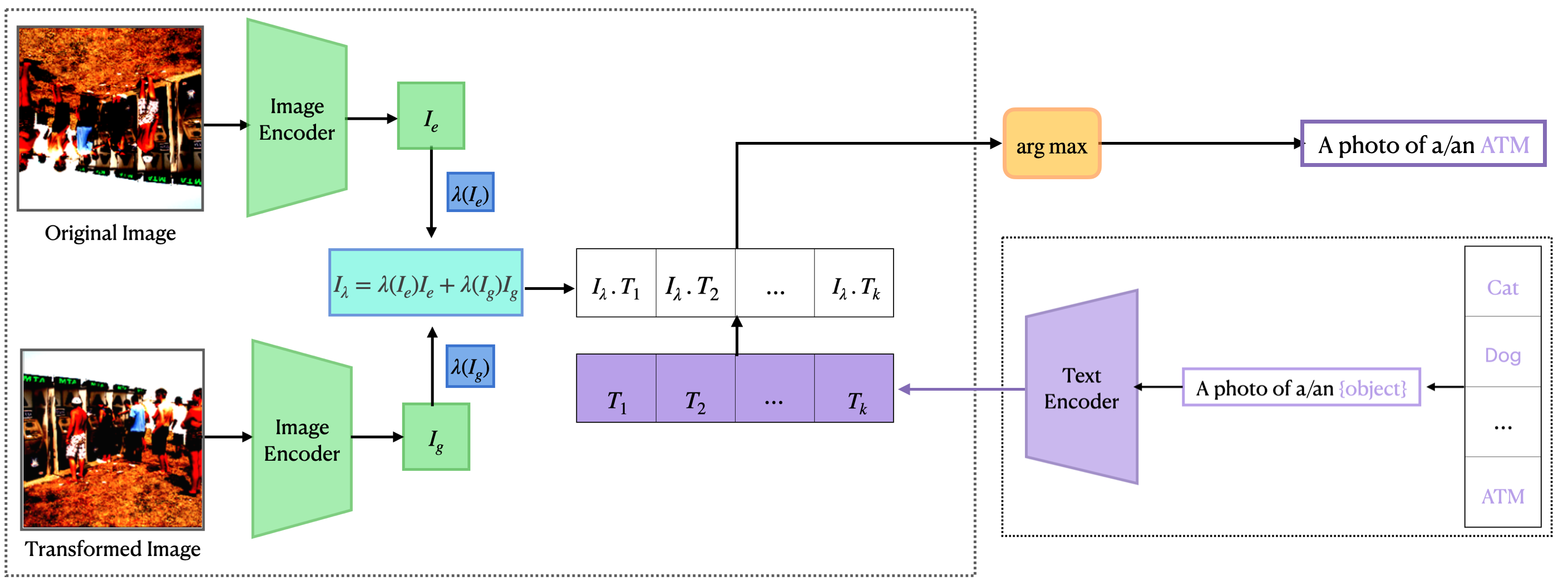}
    \caption{
    Implementation of $\lambda$-equitune on CLIP. Weighted average of image features corresponding to transformed inputs are computed, which is further used for computing text-image similarity scores.
    }
    \label{fig:equizero_clip}
\end{figure}

To validate our methods, we provide experiments on a diverse set of pretrained models and datasets. We show zero-shot performance of equizero on: 1) image classification using CLIP, 2) deep Q-learning, 3) fairness in natural language generation (NLG), and 4) compositional generalization in languages. For CLIP and deep Q-learning, we used the naturally available loss functions, namely, similarity scores and $Q$-values, respectively. For fairness in NLG and compositional generalization, we closely follow the setup of \cite{basu2022equi}, and we use \emph{regard scores}~\cite{sheng2019woman} and the negative of the maximum of the output probability distribution, respectively, as the loss function.

We first show results of $\lambda$-equitune for image classification using CLIP, finding that $\lambda$-equitune performs competitively with equizero and outperforms equitune for both zero-shot and finetuning tasks. Then, we show a simple case where finding a good proxy loss function for equizero is non-trivial: image classification using pretrained CNNs. Here, we find that equizero performs even worse than equitune but $\lambda$-equitune easily outperforms equitune and equizero. 
The organization of the paper is summarized below:
\begin{itemize}
    \item \S\ref{sec:method} provides details of $\lambda$-equitune and equizero and proves a number of their properties.
    \item \S\ref{sec:applications} provides overview of the applications considered in the experiments and how equivariance methods are used there.
    \item \S\ref{sec:experiments} provides experimental details and results for all the applications.
\end{itemize}

\section{Background}\label{sec: background}
Here we discuss relevant concepts in group equivariance and group equivariant transfer learning.
\paragraph{Group Equivariance}
A \textbf{group} $(G, \cdot)$ is a set $G$ accompanied by a binary operator `$\cdot$' that satisfy the axioms of a group, namely i) closure: $g\cdot h \in G$ for all $g, h \in G$; ii) associativity: $(g \cdot h) \cdot k = g \cdot (h \cdot k)$; iii) identity: there exists $e \in G$, such that $g \cdot e = e \cdot g$ for all $g\in G$; iv) inverse: for every element $g \in G$, there exists $g^{-1}$, such that $g\cdot g^{-1} = g^{-1} \cdot g = e$. We write $g\cdot h$ as $gh$ for brevity.

Given a set $\X$, we define a \textbf{group action} of $G$ on $\X$ as $\Gamma_{\X}: G \times \X \mapsto \X$ such that it satisfies two axioms, namely i) identity: $\Gamma_{\X}(e, x) = x$ for all $x \in \X$, where $e \in G$ is the identity; ii) compatibility: $\Gamma_{\X}(g, \Gamma_{\X}(h, x)) = \Gamma_{\X}(gh, x)$, for all $g, h \in G, x \in \X$. We write $\Gamma_{\X}(g, x)$ simply as $gx$ for brevity.

A model $\M: \X \mapsto \Y$ is \textbf{equivariant} to $G$ under the group action of $G$ on $\X$ and $\Y$ if $\M(gx) = g\M(x))$ for all $g \in G, x \in \X$. This essentially means that any group transformation $g$ to the input $\Gamma_{\X}(g, x)$ should reflect with an equivalent group transformation of the output $\Gamma_{\Y}(g, \M(x))$. 

\paragraph{Equivariant Finetuning}
Recently, \citet{basu2022equi} proposed a finetuning method called equituning that starts with potentially non-equivariant model $\M$ and produces a model $\MG$ that is equivariant to $G$. Equituning converts a pretrained model into an equivariant version by minimizing the distance of features obtained from pretrained and equivariant models. The output of an equituned model is given by 
\begin{align}
    \MG (x) = \frac{1}{|G|} \sum_{g \in G}g^{-1} \M (g x). \label{eqn:equitune}
\end{align}
While the averaging in \eqref{eqn:equitune} is shown to be useful for finetuning, we find it leads to poor equivariant zero-shot learning. This could be because the pretrained model outputs high quality features only for some of the transformed inputs. Hence, averaging them directly leads to low quality zero-shot performance. This can be avoided by using weighted averaging as discussed in Sec.~\ref{sec:method}.

\paragraph{Optimization-Based Canonical Representation}
On the other hand, \cite{kaba2022equivariance} show that group equivariant output, $\MO(x)$, can be obtained by optimizing a (non-equivariant) loss function $l(\M(gx))$ with respect to group elements $g \in G$ for any $x\in \X$ as shown below.
\begin{align}
    \MO(x) = g^{-1}_{*}\M (g_{*}x),   \label{eqn:kaba_canonicalization}
\end{align}
where $g_* = \argmin_{g \in G} l(\M (gx))$ with $l:\Y \mapsto \R$ being an injective proxy loss function and assuming the minima is unique. However, the purpose of this formulation in \cite{kaba2022equivariance} is only to obtain an equivariant representation for training from scratch. Moreover, no competitive zero-shot or finetuning performance is obtained in \cite{kaba2022equivariance} using this method. We show that the choice of $l$ plays an important role in its zero-shot and finetuning performance, even outperforming equituning. This method is obtained as a special case of $\lambda$-equituning introduced next.\\
\paragraph{Additional Related Works} Group equivariance plays a key role in geometric deep learning~\citep{bronstein2021geometric} for designing efficient domain specific neural networks. Several elegant architectures have been proposed for equivariant image classification~\citep{cohen2016group, cohen2016steerable, romero2020group}, reinforcement learning~\citep{mondal2020group, van2020mdp, mondal2022eqr, wang2022equivariant}, graph~\citep{satorras2021en, keriven2019universal, gasteiger2021gemnet} and mesh~\citep{de2020gauge, he2021gauge, basu2022equivariant} processing, natural language processing~\citep{gordon2019permutation, li2022equivariant}, and data generation~\citep{dey2020group}. These architectures need to be trained from scratch, which is not always desirable.

\textit{Frame} averaging produces equivariant output from non-equivariant architecture backbones~\citep{puny2021frame, atzmon2022frame, duval2023faenet}. Most work here focuses on finding good frames, which are equivariant subsets of groups, for specific groups, and not for general groups. And not much is known about their performance with pretrained models. \citet{kaba2022equivariance} also give a canonicalization-based method that uses an equivariant auxiliary network for constructing equivariant networks out of non-equivariant backbones and is used for training from scratch. But this work requires an additional equivariant network and appropriate trainable parameterization of the group actions, which is presented only for certain groups of interest. Further, this work is not concerned with equivariant performance of pretrained models. Zero-shot group equivariance was also recently used by \cite{muglichequivariant} for zero-shot coordination in partially observable Markov decision processes (POMDPs). In contrast, our work aims to be provide efficient equivariant transfer learning algorithms that are general in terms of considered tasks and groups, and does not require additional equivariant networks.

\section{$\lambda$-Equitune}\label{sec:method}

We propose $\lambda$-equitune, where unequal weights are assigned to features obtained from transformed inputs. This is a simple generalization of equitune in \eqref{eqn:equitune} and the optimization-based approach in \eqref{eqn:kaba_canonicalization}, where the goal is to assign higher values to better features. Like these previous methods, $\lambda$-equitune is equivariant and a universal approximator of equivariant functions.

The main idea of $\lambda$-equitune is that given a pretrained model $\M$, the features $\M(g x)$ for any fixed $x$ are not all equally important for all $g \in G$. We denote by $\lambda (gx)$ the \textit{importance weight} of feature $\M(gx)$ for $g \in G, x \in \X$. We assume $G$ is finite, just as in \cite{basu2022equi}. Suppose $\lambda: \X \mapsto \R^+$ is known a priori, and denote the $\lambda$-equituned model as $\MU$. Then we want to minimize 
\begin{equation}
\begin{aligned}
\min_{\MU (x)} \quad & \sum_{g\in G}\norm{\lambda(gx)\M(g x) - \MU(g,x)}_2^2\\
\textrm{s.t.} \quad & \MU(g x) = g \MU(x) \textrm{ for all } g \in G.\\
\end{aligned}\label{eqn:un-equi-tuning-loss-func}
\end{equation}
We call the solution to \eqref{eqn:un-equi-tuning-loss-func} as $\lambda$-equitune, given by
\begin{align}
    \MU (x) = \frac{1}{\sum_{g \in G}\lambda(gx)} \sum_{g \in G}^{|G|}g^{-1} \lambda(gx) \M (g x). \label{eqn:un-equituning-solution}
\end{align}
\subsection{Special Cases}\label{subsec:equizero}
When $\lambda (gx) = 1$ for $g \in G$, then \eqref{eqn:un-equituning-solution} is the same as \eqref{eqn:equitune}. Further, we get \eqref{eqn:kaba_canonicalization} when $\lambda$ is an indicator function $\lambda(gx) = \mathbbm{1}_{\{g = g_{*}\}}$, where $g_* = \argmin_{g \in G} l(\M (g x))$ with $l:\Y \mapsto \R$ such that the minimization is well defined. We use $\MGZ$ to denote the equizero model.

The first main contribution of our work is experimental. We show there are good loss functions for equizero for several diverse tasks that easily outperform equitune by choosing the best features. 

The second main contribution is to show that $\lambda$-equitune outperforms equitune and is competitive with equizero even when good loss functions are known. Moreover, when the loss functions are not trivial, equizero performs even worse than equitune, but $\lambda$-equitune easily outperforms both.

\subsection{Canonical-$\lambda$-Equitune}\label{subsec:canon_lambda_equitune}
Here, we provide an extension of the $\lambda$-equitune algorithm of \eqref{eqn:un-equituning-solution} to continuous groups.
The idea is to combine the canonicalization method of \cite{kaba2022equivariance} with $\lambda$-equitune leading to an expressive equivariant network with weighted averaging over features with different group actions applied to them.

\begin{definition}[Canonical-$\lambda$-equitune]
    Given a (continuous) group $G$, a non-equivariant function $M:X\mapsto Y$, and an equivariant auxiliary function (from the setting of \cite{kaba2022equivariance}) $h: X \mapsto G$, lambda functions $\lambda: X \mapsto R^+$, and a set of group elements $\Theta = \{\theta_1, \ldots, \theta_k\}$, i.e. $\theta_i \in G$, we define the canonical-$\lambda$-equitune operators as
    \begin{align}
        M^{\lambda}_{G, equi}(x) &= \frac{1}{\sum_{\theta \in \Theta} \lambda(\theta h(x)^{-1}g^{-1}gx)} \sum_{\theta \in \Theta} \lambda(\theta h(x)^{-1}x) h(x) M(\theta h(x)^{-1}x \label{eqn:lambda_canon_equitune}\\
        M^{\lambda}_{G, inv}(x) &= \frac{1}{\sum_{\theta \in \Theta} \lambda(\theta h(x)^{-1}g^{-1}gx) } \sum_{\theta \in \Theta} \lambda(\theta h(x)^{-1}x) M(\theta h(x)^{-1}x). \label{eqn:lambda_canon_invtune}
    \end{align}

\end{definition}
In Thm.~\ref{thm:lambda_canon_equivariance}, we show that the canonical-$\lambda$-equivariant network is equivariant to the group $G$
\begin{theorem}\label{thm:lambda_canon_equivariance}
    $M^{\lambda}_{G, equi}(x)$ and $M^{\lambda}_{G, inv}(x)$ are, respectively, equivariant and invariant to $G$.
\end{theorem}

\subsection{Properties}\label{subsec:properties}

Now we show in Thm.~\ref{thm:equizero_equivariance} that \eqref{eqn:un-equituning-solution} is equivariant with respect to $G$. 
\begin{theorem}[Equivariance]\label{thm:equizero_equivariance}
    $\MU$ defined in \eqref{eqn:un-equituning-solution} is equivariant to $G$.
\end{theorem}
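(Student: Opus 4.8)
The plan is to verify the equivariance condition $\MU(hx) = h\,\MU(x)$ for every $h \in G$ and $x \in \X$ directly from the closed form in \eqref{eqn:un-equituning-solution}, using a reindexing of the sum over $G$ together with the compatibility axiom of the group action.

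First I would write out $\MU(hx)$ by substituting $hx$ for $x$ in \eqref{eqn:un-equituning-solution}. The key algebraic fact is the compatibility axiom $g(hx) = (gh)x$, which lets me rewrite every occurrence of the argument $g(hx)$ as $(gh)x$. In particular the importance weights become $\lambda(g(hx)) = \lambda((gh)x)$ and the features become $\M(g(hx)) = \M((gh)x)$, while the outer group element appearing in the sum is still $g^{-1}$.

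Next I would apply the change of variables $g' = gh$. Because right multiplication by $h$ is a bijection of the finite group $G$ onto itself, summing over $g \in G$ is the same as summing over $g' \in G$. Under this substitution $g^{-1} = (g'h^{-1})^{-1} = h\,g'^{-1}$, the weight is $\lambda(g'x)$, and the feature is $\M(g'x)$. Crucially, the normalizing denominator $\sum_{g\in G}\lambda(g(hx)) = \sum_{g'\in G}\lambda(g'x)$ is then exactly the denominator appearing in $\MU(x)$, so it is unaffected by the transformation.

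The one step that requires care — and where I expect the only real subtlety to lie — is pulling the factor $h$ (i.e.\ the action $\Gamma_{\Y}(h,\cdot)$) out of the sum in the numerator. This uses that the group acts \emph{linearly} on the output space $\Y$, so that $h$ commutes with the scalar weights $\lambda(g'x)$, with scalar multiplication by the normalizer, and with the finite sum; this is an implicit standing assumption that is in any case needed for the averaging in \eqref{eqn:un-equituning-solution} to be well defined. Granting it, the numerator becomes $h\sum_{g'\in G} g'^{-1}\lambda(g'x)\M(g'x)$, and combining with the unchanged denominator yields $\MU(hx) = h\,\MU(x)$, which is the desired equivariance. Everything else is routine bookkeeping.
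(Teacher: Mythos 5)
Your proof is correct and follows essentially the same route as the paper's: substitute $hx$, reindex the sum via $g' = gh$ (a bijection of the finite group), observe that the normalizing denominator is invariant under this reindexing, and pull $h$ out front. Your explicit remark that extracting $h$ from the sum requires the action on $\Y$ to be linear is a fine point the paper leaves implicit, but it does not change the argument.
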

We define a universal approximator in Def.~\ref{def:universality}. Then, Thm.~\ref{thm:equizero_universality} proves that $\lambda$-equitune is a universal approximator of equivariant functions for groups where $\norm{g}=1$ for $g\in G$. This includes a wide range of groups including the permutation group, the $SO(n)$ groups of special orthogonal groups, etc. This condition is the same as the class of groups considered in \cite{basu2022equi}.

\begin{definition}[Universal approximator]\label{def:universality}
A model $\M: \X \mapsto \Y$ is a universal approximator of a continuous function $f: \X \mapsto \Y$ if for any compact set $\K \subset \X$ and $\epsilon > 0$, there exists a choice of parameters for $\M$ such that $\norm{f(x) - \M(x)} \leq \epsilon$ for all $x \in \K$.
\end{definition}

\begin{theorem}[Universality]\label{thm:equizero_universality}
    Let $f_G:\X \mapsto \Y$ be any continuous function equivariant to group $G$ and let $\lambda: \X \mapsto \R^+$ be any positive scalar function. And let $\M: \X \mapsto \Y$ be a universal approximator of $\frac{f_G}{\lambda}$. Here $\X$ is such that if $x \in \X$, then $gx \in \X$ to ensure the equivariance of $f_G$ is well-defined. Then, $\MU$ is a universal approximator of $f_G$.
\end{theorem}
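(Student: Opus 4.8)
The plan is to reduce the statement to a single uniform approximation and then propagate it through \eqref{eqn:un-equituning-solution} using the equivariance of $f_G$. Fix a compact $\K\subset\X$ and a tolerance $\eps>0$. Since $G$ is finite and each action $x\mapsto gx$ is continuous, the orbit $\tilde{\K}=\bigcup_{g\in G} g\K$ is a finite union of continuous images of $\K$, hence compact, and it lies in $\X$ by the closure assumption stated in the theorem. Using that $\M$ is a universal approximator of $f_G/\lambda$, I would choose parameters so that $\norm{\M(z)-\tfrac{f_G}{\lambda}(z)}\le\delta$ for all $z\in\tilde{\K}$, where $\delta$ is fixed at the end. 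Every argument $gx$ that appears in \eqref{eqn:un-equituning-solution} for $x\in\K$ lies in $\tilde{\K}$, so this one bound controls all $|G|$ features simultaneously.

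The engine of the proof is a cancellation identity. In the idealized case $\M=f_G/\lambda$, the scaled feature collapses: $\lambda(gx)\M(gx)=f_G(gx)=g\,f_G(x)$, where the last equality is exactly where the equivariance of $f_G$ is used. Applying $g^{-1}$ gives $g^{-1}\lambda(gx)\M(gx)=f_G(x)$ for \emph{every} $g\in G$, so all the aligned scaled features coincide with the single vector $f_G(x)$ and their mean over $G$ is $f_G(x)$. This is the precise sense in which \eqref{eqn:un-equituning-solution} reconstructs an equivariant $f_G$, and it is the only place the equivariance hypothesis enters.

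For the approximate case I would expand $\MU(x)-f_G(x)$ as one sum over $g\in G$ of per-term deviations and bound it by the triangle inequality. Two structural facts keep the error at order $\delta$. First, $\norm{g}=1$ forces $g^{-1}$ to preserve norms (the actions here are orthogonal, e.g.\ permutations or $SO(n)$), so applying $g^{-1}$ to the deviation $\lambda(gx)\bigl(\M(gx)-\tfrac{f_G}{\lambda}(gx)\bigr)$ preserves its norm, leaving a per-term error of size $\lambda(gx)\,\delta$. Second, there are only $|G|$ terms with nonnegative normalized weights, so the deviations combine without blow-up. This yields $\norm{\MU(x)-f_G(x)}\le C\delta$ with $C$ uniform over $x\in\K$, and taking $\delta=\eps/C$ completes the argument.

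The step I expect to be the main obstacle is the normalization bookkeeping. The cancellation identity produces the \emph{uniform} mean of the aligned features (normalizer $|G|$), whereas \eqref{eqn:un-equituning-solution} normalizes by $\sum_{g}\lambda(gx)$; the two agree for equitune ($\lambda\equiv1$) but not for a general positive $\lambda$, so I would verify the intended pairing carefully and, for the error estimate, assume $\lambda$ is bounded above and below on the compact orbit $\tilde{\K}$ (automatic when $\lambda$ is continuous) so that the constant $C$ above is finite and the $\delta$-error in $\M$ transfers to an $O(\delta)$-error in $\MU$ uniformly in $x$.
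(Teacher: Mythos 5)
Your proposal is, in substance, the paper's own proof: the paper likewise forms the compact orbit set $\K_{sym}=\bigcup_{g\in G}g\K$ (citing \citet{yarotsky2022universal}), invokes uniform approximation of $f_G/\lambda$ on $\K_{sym}$, uses exactly your cancellation identity $g^{-1}\lambda(gx)\,\tfrac{f_G(gx)}{\lambda(gx)}=g^{-1}f_G(gx)=f_G(x)$, and finishes with the triangle inequality and $\norm{g}=1$ (which, holding for every group element and its inverse, does force each $g$ to act isometrically, as you use), the per-term errors $\lambda(gx)\,\epsilon$ collapsing under the normalized weights.

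The ``normalization bookkeeping'' you single out as the main obstacle is, however, a genuine issue---and it sits in the paper's proof, not merely in your sketch. Plugging $\M=f_G/\lambda$ exactly into \eqref{eqn:un-equituning-solution} yields $\MU(x)=\tfrac{|G|}{\sum_{g\in G}\lambda(gx)}\,f_G(x)$ rather than $f_G(x)$, so the paper's first equality \eqref{eqn:univ_proof_1} is valid only when $\sum_{g\in G}\lambda(gx)=|G|$; as written it silently assumes this (and its normalizer $\sum_{g\in G}\lambda(ghx)$ carries a stray $h$ copied over from the equivariance proof). The repair you stop just short of is this: the quantity $s(x)=\sum_{h\in G}\lambda(hx)$ is $G$-invariant, so replacing $\lambda$ by $\lambda'=|G|\lambda/s$ changes neither the weights $\lambda(gx)/s(x)$ appearing in \eqref{eqn:un-equituning-solution} nor $\MU$ itself, while enforcing $\sum_{g\in G}\lambda'(gx)=|G|$; running your argument with $\M$ a universal approximator of $f_G/\lambda'$ (equivalently, of $s\,f_G/(|G|\lambda)$) makes your cancellation identity hold exactly against the weighted normalizer. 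With that pairing your constant is $C=1$, since the per-term errors $\lambda'(gx)\,\delta$ are averaged by weights summing to one, so the upper and lower bounds on $\lambda$ (or its continuity, which the theorem nowhere grants---it allows \emph{any} positive scalar function) that you propose to assume become unnecessary.
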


\paragraph{Computational Complexity}
Note that equitune, equizero, and $\lambda$-equitune have the same compute complexity. In practice, in Tab.~\ref{tab:compute_equitune_vs_equizero} we find that equitune and equizero have exactly the same time and memory consumption, whereas $\lambda$-equitune takes a little more time and memory because of the additional $\lambda$ network. We illustrate on RN50 and ViT-B/32 models of CLIP using the same text encoder, but different image encoders. RN50 uses a Resnet50 based image encoder, whereas ViT-B/32 uses a vision transformer based image encoder. 
\begin{table}
\centering
\caption{Inference times of equitune, equizero, and $\lambda$-equitune for the c4 group for various CLIP models on CIFAR100. We use batch size 32 on a single Nvidia A100 GPU.}
\label{tab:compute_equitune_vs_equizero}
\begin{tabular}{ccccccc} 
\toprule
Model    & \multicolumn{3}{c}{Time (sec.)} & \multicolumn{3}{c}{Memory (MB)}  \\
         & Equitune & Equizero & $\lambda$-Equitune         & Equitune & Equizero & $\lambda$-Equitune        \\ 
\midrule
RN50  &     14.15      &   14.10     &    23.5        & 3703      &  3703   &  3941         \\
ViT-B/32  &  11.00     &   10.27     &    16.08         & 2589      &  2587  &  2915      \\
\bottomrule
\end{tabular}
\end{table}

\paragraph{Beyond Zero-Shot Learning} Let us emphasize that even though the equizero model in \eqref{eqn:kaba_canonicalization} is not differentiable due to the $\argmax$, we can still use simple gradient estimators known in the literature. One popular estimator is the straight-through estimator~\citep{bengio2013estimating}, where the equizero output in \eqref{eqn:kaba_canonicalization} would be written as $\MGZ(x) = \M(x) + (\MGZ(x) - \M(x))$.\texttt{detach()}, where \texttt{detach()} indicates that no gradient flows through the term $(\MGZ(x) - \M(x))$. In practice, we found it to be slightly better to use $\MG(x)$ instead of $\M(x)$ and write $\MGZ(x) = \MG(x) + (\MGZ(x) - \MG(x))$.\texttt{detach()}. \S\ref{subsec:compositional_generalization_exp} illustrates few-shot learning and finetuning using equizero and compares with equituning.

\section{Applications}\label{sec:applications}
First we present several applications in \S\ref{subsec:equizero_applications} where finding a loss function for equizero is easy. This naturally leads to excellent equivariant zero-shot results outperforming equitune. Then, in \S\ref{subsec:lambda_applications} we provide two applications in image classification to show 1) the benefits and drawbacks of equizero compared to equitune and $\lambda$-equitune, and 2) that $\lambda$-equitune consistently performs well avoiding the drawbacks of equizero. 

\subsection{Equizero Applications}\label{subsec:equizero_applications}
Here we provide applications where equizero achieves excellent zero-shot performance, namely: 1) deep Q-learning, 2) fairness in NLG, and 3) compositional generalization in languages.

\paragraph{Equizero Reinforcement Learning}
Recent works, such as \cite{van2020mdp, mondal2020group}, have developed RL algorithms that leverage symmetries in the environments that helps improve robustness and sample efficiency. But no existing work efficiently uses group equivariance on top of pretrained RL models. Here we apply equizero and equitune on top of pretrained models inspired from the group symmetries found in \citet{van2020mdp}. We find that equitune outperforms non-equivariant pretrained models but equizero outperforms both. We simply use the $Q$-values as the proxy loss function as described in \S\ref{subsec:app_eq_DQL} with more details.

\paragraph{Group-Theoretic Fairness in NLG}
We seek to reduce the social biases inherent in language models (LMs), focusing on GPT2~\citep{radford2019language}. We consider the group-theoretic fairness setup of \citet{sheng2019woman} and \citet{basu2022equi}. We take the sets of demographics, namely [`man', `woman'], [`straight', `gay'], and [`black', `white']. For each demographic group, \citet{sheng2019woman} proposed two tasks, called \textit{respect task} and \textit{occupation task}, where each task consists of five context phrases. A language model (LM) is given these contexts to generate sentences. These generated sentences are then classified as `positive', `negative', `neutral', or `other' by a \textit{regard classifier} also proposed by \citet{sheng2019woman}. These outputs are called regard scores. A regard classifier is a BERT-based model similar to a sentiment classifier but more specific for fairness tasks. We use equizero using the regard scores as the proxy loss function, so as to maximize positivity in the generated texts, while guaranteeing group-theoretic fairness. To that end, we propose two algorithms that help in reducing existing biases across demographic groups. 

\textit{EquizeroLM and R-EquizeroLM:} \citet{basu2022equi} define EquiLM and R-EquiLM, that use a sequential version of equitune to perform group transformed averaging to achieve fairness across demographic groups. While EquiLM and R-EquiLM generate debiased outputs, they do not produce positive regard scores, which is desirable to reduce toxicity in generated text. We propose EquizeroLM and R-EquizeroLM which use equizero to maximize regard scores, ensuring both fair and less toxic. Further details provided in \S\ref{subsec:app_fairness_in_NLG}.

\paragraph{Zero-Shot Compositional Generalization}
We show compositional generalization capabilities of equizero on the SCAN dataset~\citep{lake2018generalization}. SCAN measures compositionality using a language-to-action translation task. E.g., if the model learns that the phrase ``Jump", ``Run", ``Run Twice" translate to the actions ``JUMP", ``RUN", ``RUN RUN" from the train set, then, SCAN tests whether the model also learns that ``Jump Twice" translates to ``JUMP JUMP". Such a reasoning however common in human beings is hard to find in language models.

We apply equizero on two tasks in SCAN, \textit{Add Jump} and \textit{Around Right}. \citet{gordon2019permutation} solved these tasks by constructing sophisticated group equivariant networks from scratch and training them. \citet{basu2022equi} used the same group actions as \citet{gordon2019permutation}, but used equituning on pretrained non-equivariant models for a few iterations and obtained comparable results. But, as we note, equitune has poor zero-shot performance. We show that equizero using negative of maximum probability from the output as the loss function gives much better zero-shot performance. Using gradient estimators described in \S\ref{subsec:properties}, we also compare the finetuning performance of equizero against equitune. Details of group actions used are given in \S\ref{subsec:app_application_comp_generalizatiion}

\subsection{$\lambda$-Equitune Applications}\label{subsec:lambda_applications}
Here we consider two important applications: CLIP-based and CNN-based image classification. For CLIP, it is easy to find a loss function for equizero that provides better results than equitune. But for the simple case of CNN-based classification it is non-trivial to find such a loss function. Since $\lambda$-equitune does not require a loss function, it performs better than equitune in both cases. Equizero only performs well for CLIP, but fails miserably for CNN-based classification.

\paragraph{CLIP-Based Image Classification}
CLIP is a pretrained model consisting of image and text encoders that give impressive zero-shot classification performance across variety of unseen datasets. But, in Fig.~\ref{fig:test_imagenet} and \ref{fig:test_cifar100}, we find that CLIP is not robust to simple transformations such as rotation by multiples of $90^{\circ}$ or random flips. This trend is seen across different image encoders like RN50, RN101 \citep{radford2021learning}, ViT-B/32 and ViT-B/16 \citep{DBLP:conf/iclr/DosovitskiyB0WZ21}. This can be avoided by making the model in/equi-variant to such transformations, e.g., by using equitune. But we show in \S\ref{subsec:equiCLIP_exp} that equitune does not produce good zero-shot performance. We show in \S\ref{subsec:equiCLIP_exp} that using equizero with image-text similarity score  as loss function provides much better zero-shot results than equitune. Later, in \S\ref{subsec:equiCLIP_exp}, we show that $\lambda$-equitune achieves better zero-shot performance than equitune without any loss function. Finally, when finetuned, we find that $\lambda$-equitune tends to perform the best, possibly because it does not need gradient estimators like equizero, and because it uses weighted averaging to obtain better features than equitune.

\paragraph{CNN-based image classification}
For image classification using pretrained CNNs such as Resnet and Alexnet, we note that finding a good loss function for equizero is non-trivial. As such, we consider two loss functions 1) negative of the maximum probability as it worked well with the SCAN task in \S\ref{subsec:equizero_applications} and 2) the entropy of the output distribution since it correlates with the confidence of the model~\citep{wangidk}. But, equizero with these loss functions performs even worse than equitune. Further, we find that $\lambda$-equitune easily outperforms both equitune and equizero.

\section{Experiments}\label{sec:experiments}

Here, we provide experimental results for equizero and $\lambda$-equitune in \S\ref{subsec:equivariant_zero_shot} and \S\ref{subsec:exp_equivariant_zero_shot}, respectively, for all the applications described in \S\ref{sec:applications}. Additional experiments for canonical-$\lambda$-equitune are provided in \S.~\ref{subsec:app_additional_results_canon_lambda_equitune}. The code for this paper is available at \url{https://github.com/basusourya/lambda_equitune}.

\begin{figure*}[!t]
    \centering
    \begin{subfigure}{0.275\textwidth}
        \centering
        \includegraphics[width=\textwidth]{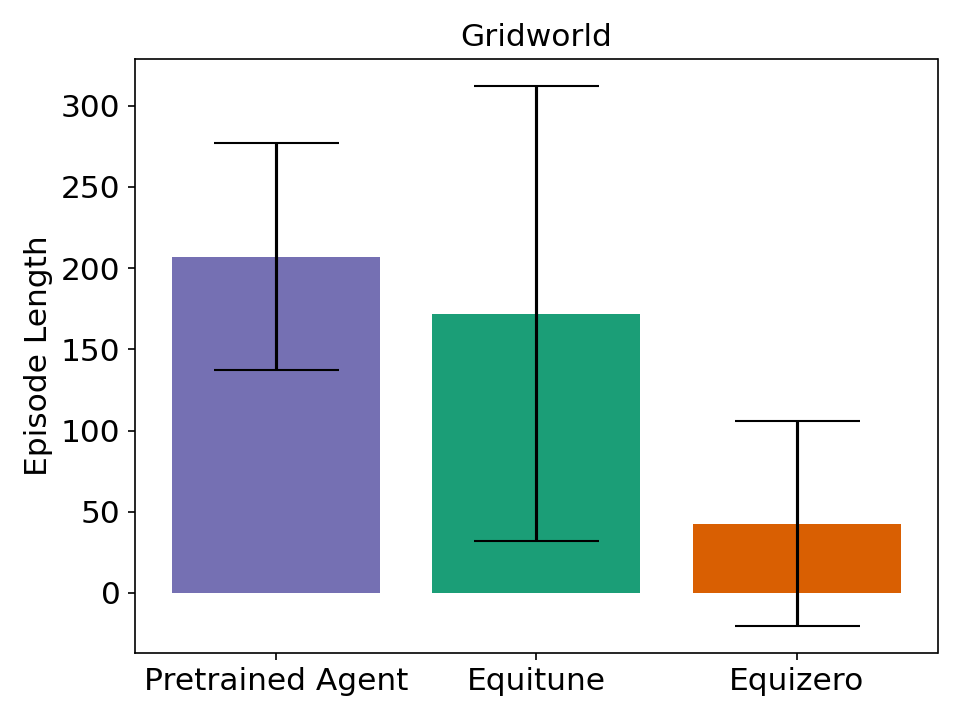}
        \caption{}
    \end{subfigure}\hfill
    \begin{subfigure}{0.275\textwidth}
        \centering
        \includegraphics[width=\textwidth]{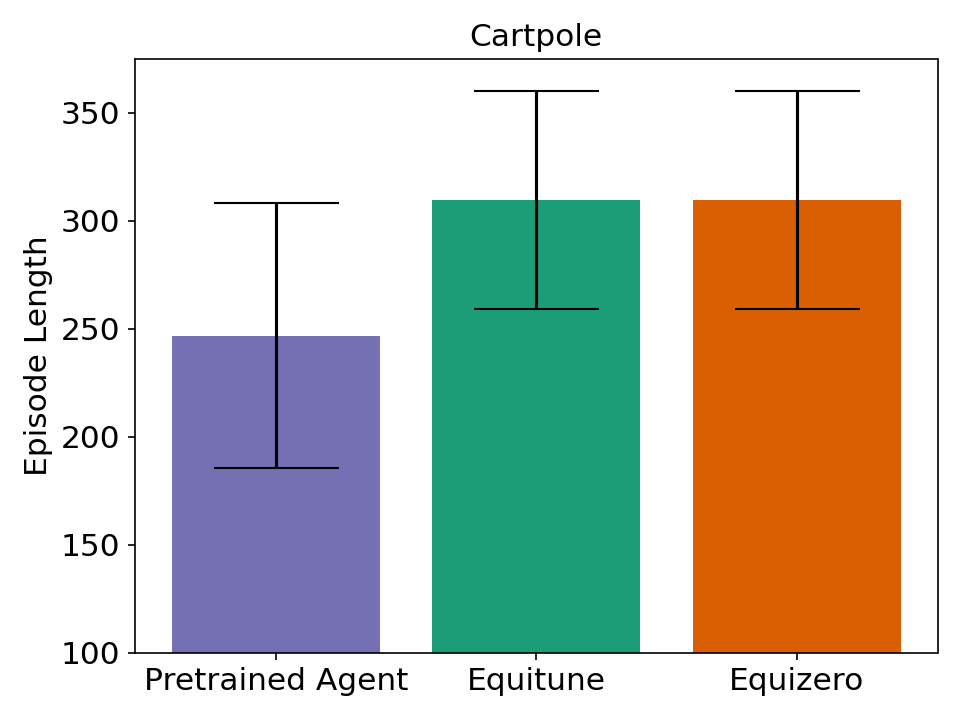}
        \caption{}
    \end{subfigure}\hfill
    \begin{subfigure}{0.275\textwidth}
        \centering
        \includegraphics[width=\textwidth]{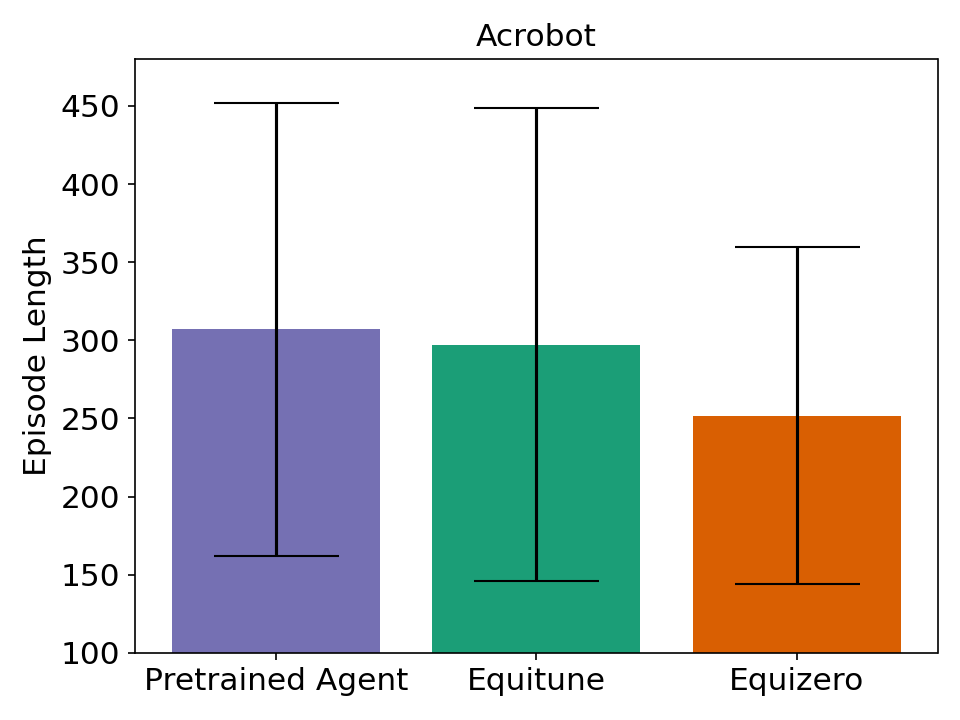}
        \caption{}
    \end{subfigure}\hfill
    
    \caption{Comparison of zero-shot performance of equizero to equituning and a non-equivaraint pretrained model are shown in (a), (b), and (c) for Gridworld, Cartpole, and Acrobot, respectively. Equizero outperforms both equituning and non-equivariant pretrained model. Results over five seeds.}
    \label{fig:rl}
\end{figure*}

\subsection{Zero-Shot Performance using Equizero}\label{subsec:equivariant_zero_shot}

\subsubsection{Equizero Reinforcement Learning}\label{subsec:equiRL_exp}
\textbf{Experimental Setting:} We first pretrain Deep Q-learning nets (DQNs) for each of the Gridworld, Cartpole, and Acrobot environments using the default architecture from \citet{raffin2021stable} with 103k parameters. We pretrained all the models using a learning rate $10^{-4}$. We used training time steps as 100k, 100k, and 70k for Gridworld, Cartpole, and Acrobot, respectively. These number of steps were chosen to obtain the best models by varying the time steps from 50k to 100k in multiple of 10k for a fixed seed.\\
\textbf{Results and Observations:} 
Fig.~\ref{fig:rl} show the evaluation performance of equizero and compare it with equituning and non-equivariant pretrained models. We find that equituning performs better than non-equivariant models and equizero outperform both of them. The results are over five seeds.

\subsubsection{Fairness in Natural Language Generation}\label{subsec:fairness_exp}
\begin{figure*}[!htb]
    \centering
    \includegraphics[width=0.5\textwidth]{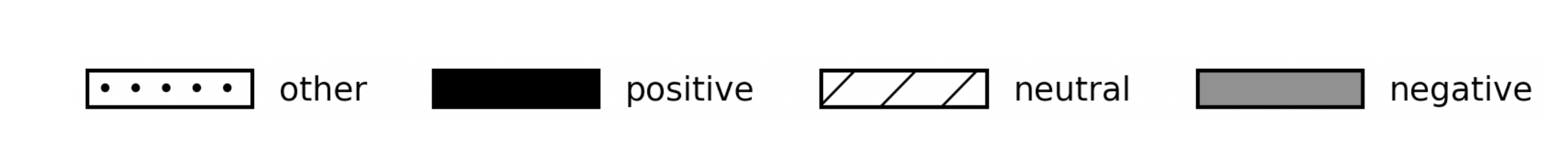}
\end{figure*}
\begin{figure*}[htp]
\centering
\begin{subfigure}[b]{0.31\textwidth}
\centering
\includegraphics[width=\textwidth]{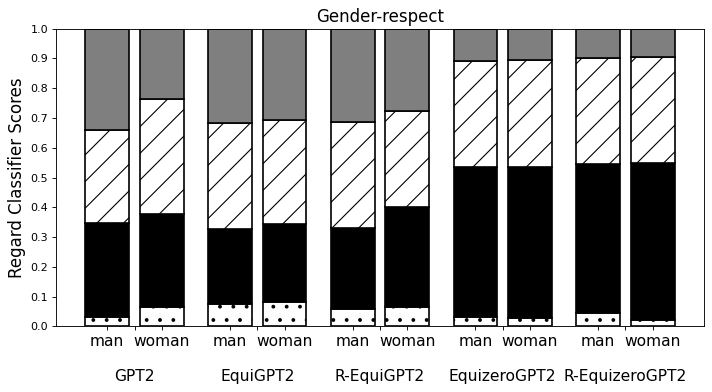}
\caption{}
\end{subfigure}
\hfill 
\begin{subfigure}[b]{0.31\textwidth}
\centering
\includegraphics[width=\textwidth]{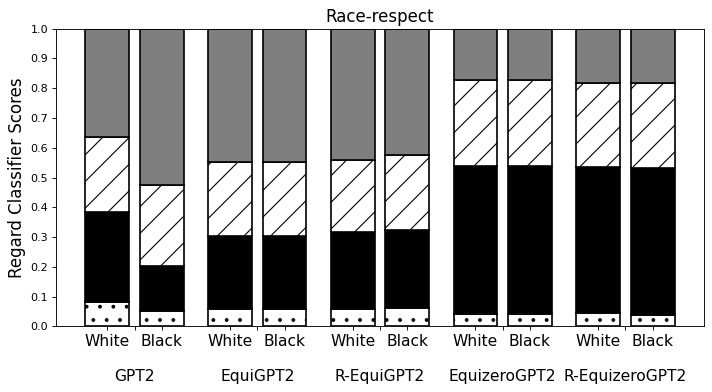}
\caption{}
\end{subfigure}
\hfill
\begin{subfigure}[b]{0.31\textwidth}
\centering
\includegraphics[width=\textwidth]{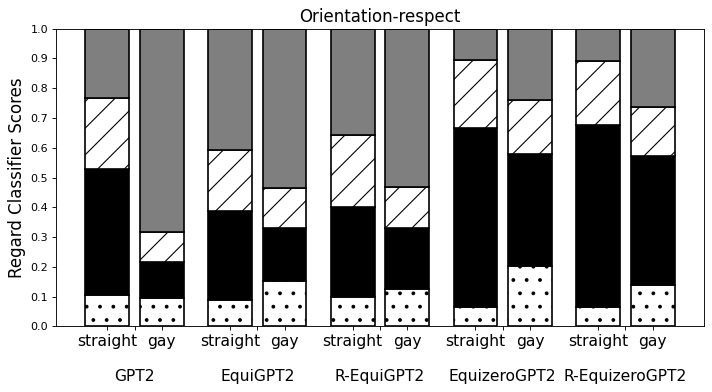}
\caption{}
\end{subfigure}
\caption{Plots (a), (b), and (c) show the regard scores for GPT2, EquiGPT2, R-EquiGPT2, EquizeroGPT2, and R-EquizeroGPT2.  In equitune, if negativity is present in some demographics, it gets redistributed in the other demographics, which is undesirable. Equitune is only able to debias, whereas, equizero models not only debiases the texts but also makes the regard scores more positive.}
\label{fig:equinlg_respect}
\end{figure*}
\textbf{Experimental Setting}: We use GPT2 \citep{radford2019language}, with 117M parameters as our pretrained model. We consider the lists of demographic groups [`man', `woman'], [`white', `black'], and [`straight', `gay' ]. We compare our method against EquiGPT2 and R-EquiGPT2 \citep{basu2022equi}. For the equizero models, we use beam length $m$ as 5, as described in \S\ref{subsec:app_fairness_in_NLG}. We limit the sentence lengths to be 15 for all models. We generated 500 sentences for each of the models and demographic groups by varying the seeds for both respect and occupation context. \\
\textbf{Results and Observations}: Fig.~\ref{fig:equinlg_respect} and \ref{fig:equinlg_occupation} compare the regard scores for all the considered models. We observe that equizero models are not only able to debias among various demographics like `man' and `woman', but it also reduces the toxicity/ negativity of the scores. Debiasing is seen from the equality of the scores amongst the demographics considered for equivariance. And reduction in toxicity is observed by noticing that the regard scores are more positive. Like equituning~\citep{basu2022equi}, equizero models show high quality of generated texts. Sample generations from the demographic groups [`straight', `gay'] are shown in Tab.~\ref{tab:EquizeroGPT2_gender_respect}, \ref{tab:EquiGPT2_gender_respect}, \ref{tab:REquiGPT2_gender_respect}, and \ref{tab:GPT2_gender_respect} for all the models. Note in Tab.~\ref{tab:EquizeroGPT2_gender_respect} and \ref{tab:EquiGPT2_gender_respect}, that even with perfect equivariance, where the word `straight' simply gets replaced by `gay', the regard scores are very different. This shows the presence of bias in the regard classifier itself as was also observed by \citet{basu2022equi}.

\subsubsection{Compositional Generalization using Equizero}\label{subsec:compositional_generalization_exp}

\begin{table}
\centering
\caption{Zero-shot performance of non-equivariant models, equituned, and equizeroed models for LSTM on SCAN. LSTMs were trained for 200K iterations. We find that equizero outperforms other methods using non-equivariant pretrained models. Results are over three random seeds.}
\label{tab:equizero_zeroshot_LSTM_scan}
\begin{tblr}{
  cells = {c},
  cell{1}{2} = {c=3}{},
  cell{1}{5} = {c=3}{},
  vline{2-3} = {1}{},
  vline{2,5} = {2-5}{},
  hline{1,6} = {-}{0.08em},
  hline{2} = {2-7}{0.08em},
  hline{3-5} = {-}{},
}
             & \small{\textit{Add Jump }} &            &                     & \small{\textit{Around Right }} &            &                     \\
\small{Model}        & \small{Group  }            & \small{Val. Acc.}  & \small{Test Acc.}           & \small{Group}                 & \small{Val. Acc.}  & \small{Test Acc.}           \\
\small{LSTM }        & \small{-- }                & \small{99.1 (0.3)} & \small{0.0 (0.0)}           & \small{--}                     & \small{98.9 (0.7)} & \small{0.4 (0.7)}           \\
\small{EquiLSTM}     & \small{Verb}               & \small{62.5 (1.5)} & \small{7.1 (1.3)}           & \small{Direction}              & \small{80.5 (4.5)} & \small{28.2 (12.9)}         \\
\small{EquizeroLSTM} & \small{Verb}               & \small{98.4 (0.9)} & \small{\textbf{75.2 (1.5)}} & \small{Direction}              & \small{98.7 (5.8)} & \textbf{81.7 (2.4)} 
\end{tblr}
\end{table}
\textbf{Experimental Setting:} 
We evaluate the performance of our algorithm on the SCAN Dataset \citep{gordon2019permutation} on the \textit{Add Jump} and \textit{Around Right} tasks. 
All the recurrent models (RNN, LSTM, GRU) and their equivariant counterparts contain a single hidden layer of 64 units. For all training processes, we use the Adam optimizer~\citep{kingma2015adam} and teacher-forcing ratio 0.5~\citep{williams1989learning}. For pretraining and finetuning, we use 200k and 10k iterations respectively. For pretraining, we use a learning rate of $10^{-4}$, whereas for finetuning, we used learning rates $10^{-5}$ and $2\times 10^{-5}$ for \textit{Add Jump} and \textit{Around Right}, respectively. Results are over three seeds.\\
\textbf{Observation and Results:} 
Tab.~\ref{tab:equizero_zeroshot_LSTM_scan} shows that equizero outperforms equituned and non-equivariant models on zero-shot performance with LSTMs. In \S\ref{subsec:additional_results}, we observe similar results for RNNs and GRUs in Tab.~\ref{tab:equizero_zeroshot_RNN_scan} and \ref{tab:equizero_zeroshot_GRU_scan}, respectively. 
We use gradient estimators discussed in \S\ref{subsec:properties} for performing few-shot learning using equizero.  For few-shot learning, we find in Fig.~\ref{fig:equituning_vs_equizero} in \S\ref{subsec:additional_results} that equizero is competitive with equitune for small iterations, but as the number of steps increase, equitune is the better algorithm. This is expected since the gradients are computable for equitune, but only approximated in equizero. Tab.~\ref{tab:equizero_10k_LSTM_scan}, \ref{tab:equizero_10k_GRU_scan}, and \ref{tab:equizero_10k_RNN_scan} in \S\ref{subsec:additional_results} provide the results for finetuning using equitune and equizero for 10k iterations. We find the equitune is slightly better when finetuning for 10k iterations. This shows equizero is better for zero-shot and few-shot learning, but for large iterations, equitune is preferable.

\subsection{Zero-Shot and Finetuning Performance using $\lambda$-Equitune}\label{subsec:exp_equivariant_zero_shot}
\subsubsection{Equi/Invariant Image Classification using CLIP}\label{subsec:equiCLIP_exp}
\begin{figure*}
    \centering
\subfloat[\label{fig:test_imagenet}]{\includegraphics[width=.313\linewidth]{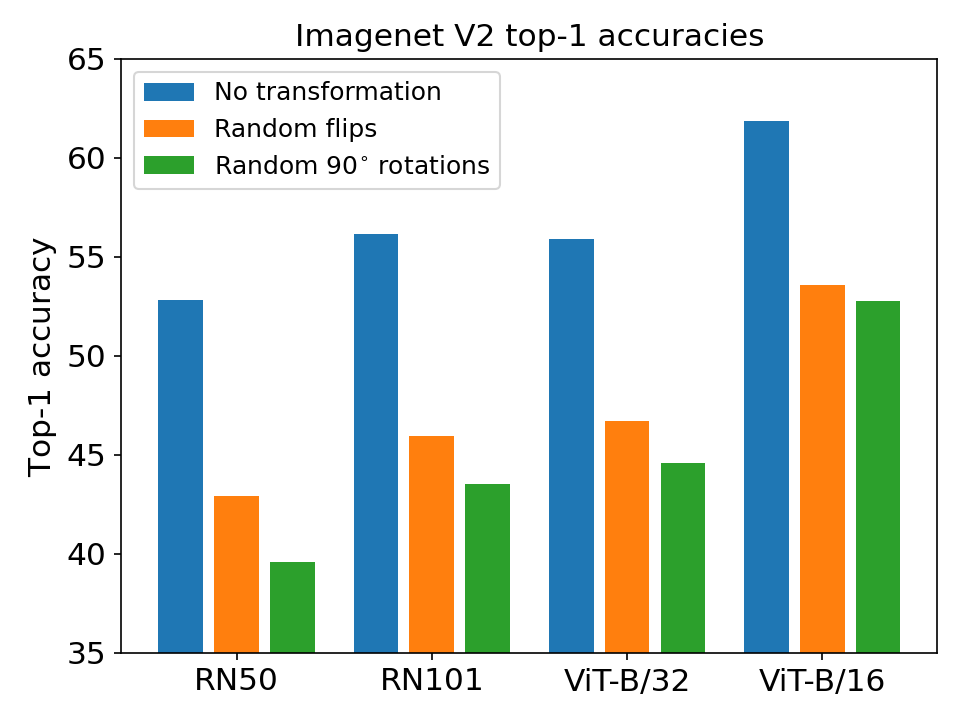}}
\subfloat[\label{fig:clip_imagnet_rot}]{\includegraphics[width=.313\linewidth]{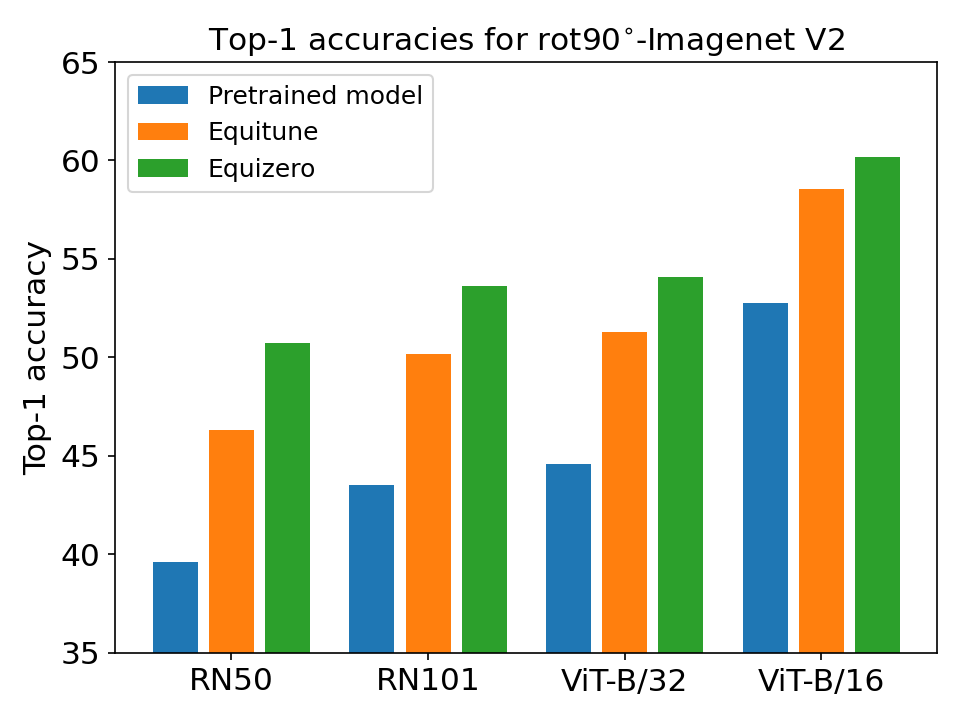}}
\subfloat[\label{fig:clip_imagnet_flip}]{\includegraphics[width=.313\linewidth]{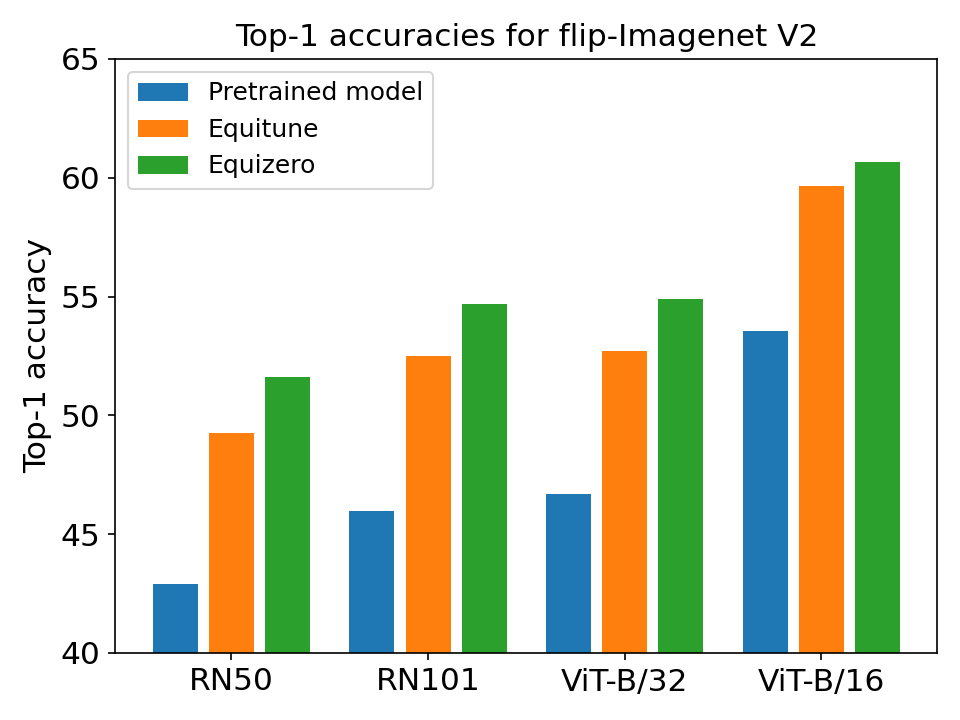}}
    \caption{In (a) note that zero-shot performance of CLIP drops significantly when we add random rotations or flips to the input. This trend is seen across all image encoders, i.e. ResNet (RN50 and RN101) and ViT (ViT-B/32 and ViT-B/16). In (b) and (c) we show classification results on Imagenet-V2 with random $90^{\circ}$ rotations and flips, respectively. We observe equizero outperform equitune and original CLIP for all image encoders.}
    \label{fig:clip}
\end{figure*}
\textbf{Experimental Setting:} 
We first perform zero-shot image classification on Imagenet-V2 and CIFAR100 using CLIP. We use two transforms, random $90^{\circ}$ rotations and flips, for testing their robustness. We encode class labels using the 80 text templates provided in \citet{radford2021learning}~\footnote{Obtained from \url{https://github.com/openai/CLIP}}.

We then evaluate finetuning capabilities of equitune, equizero, and $\lambda$-equitune on CIFAR100 with random $90^{\circ}$ rotations. Here we choose $\lambda$ to be a two-layered feed-forward network, which is described in detail along with learning rates used, in \S\ref{subsec:app_lambda_equitune_image_classification}. The input to this $\lambda$-network are the features from a frozen CLIP encoder. First, the $\lambda$-network is trained with the model kept frozen. Then, while finetuning the model, the $\lambda$ network is frozen. We show results for both Resnet and ViT backbone trained for 1000, 2000, 3000, 4000 finetuning steps. \\ \\
\textbf{Results and Observations:} Fig.~\ref{fig:clip} shows test accuracies for Imagenet-V2 with random $90^{\circ}$ rotations and flips. We observe that the pretrained model's performance reduces drastically when transformations are applied to the dataset. Whereas both equitune and equizero are relatively robust to such transformations. Moreover, equizero outperforms both equitune and the pretrained model. Similar observations are made for CIFAR100 in Fig.~\ref{fig:clip_cifar} in \S\ref{subsec:additional_results}.

In Fig.~\ref{fig:clip_finetune_a} and ~\ref{fig:clip_finetune_additional} we plot the test accuracies of $\lambda$-equitune on CIFAR100 for both variants of Resnet and ViT backbones. We observe that $\lambda$-equitune performs better than both equitune and equizero (with finetuning) on Resnets. On ViT-B/16, we observe that $\lambda$-equitune easily outperforms both equitune and equizero (with finetuning). On ViT-B/32, we find that $\lambda$-equitune outperforms equitune but equizero outperforms both equitune and $\lambda$-equitune. Thus, $\lambda$-equitune performs competitively with equizero, even in applications where good loss functions are known.
\subsubsection{Equi/Invariant Image Classification using Pretrained CNNs}\label{subsubsec:exp_unequitune_classification}
\textbf{Experimental Setting:}
We now evaluate the performance of $\lambda$-equitune on rot90-CIFAR10 dataset. Here, each image is rotated randomly by a multiple of $90^{\circ}$. We use pretrained Alexnet and Resnet, trained extensively over CIFAR100. For the $\lambda$-equituning, we choose $\lambda$ as a two layered feed-forward network with a hidden layer of dimension 100 with input as features extracted by this pretrained Alexnet or Resnet. Along with $\lambda$, we also perform linear probing wherein last two layers for the classification problem is being learned using a learning rate of $10^{-3}$. 
\begin{figure}
    \centering
    \begin{subfigure}{0.45\textwidth}
      \centering
      \includegraphics[width=\textwidth]{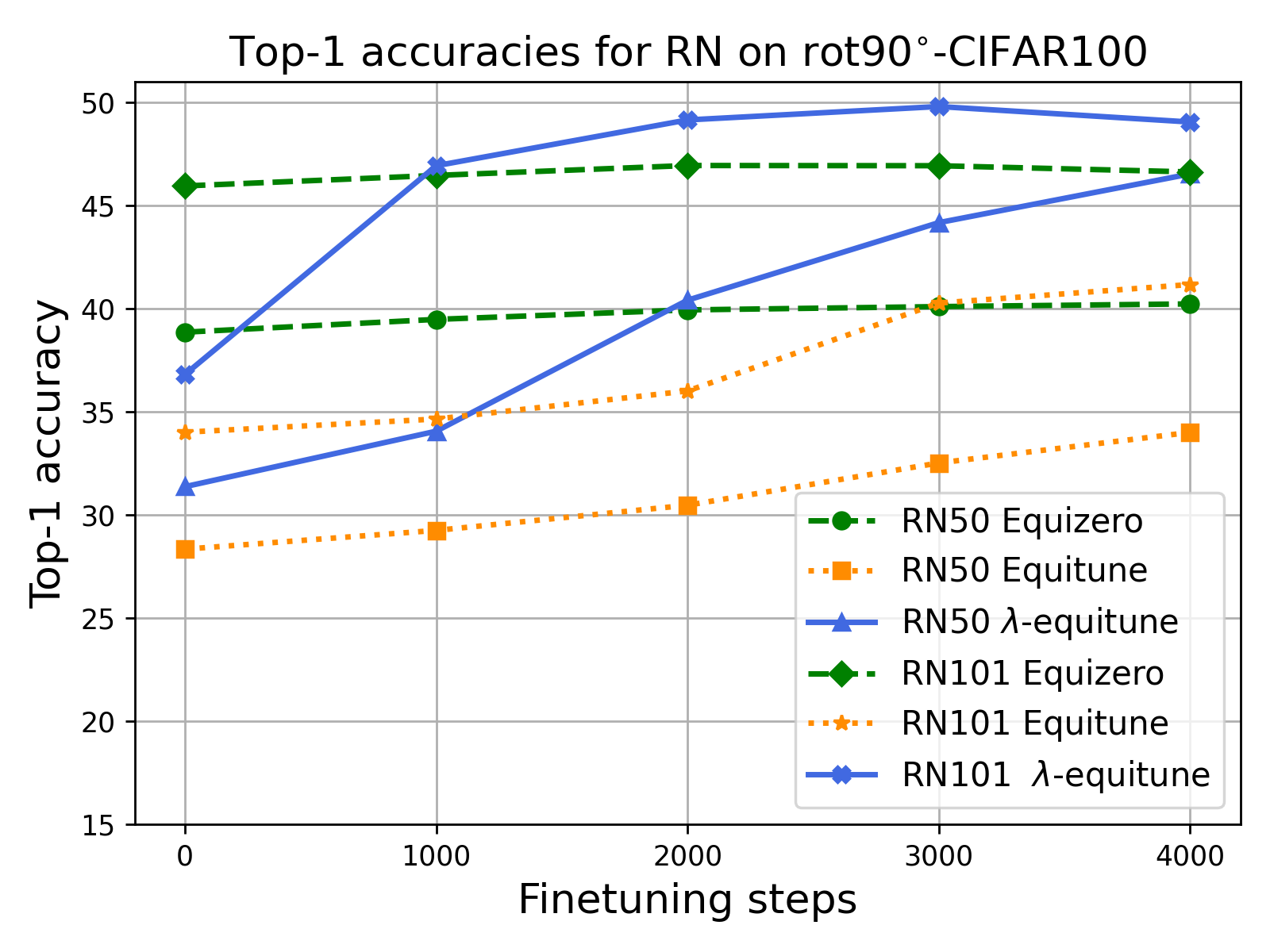}
      \caption{}
      \label{fig:clip_finetune_a}
    \end{subfigure}\hfill
    \begin{subfigure}{0.45\textwidth}
      \centering
      \includegraphics[width=\textwidth]{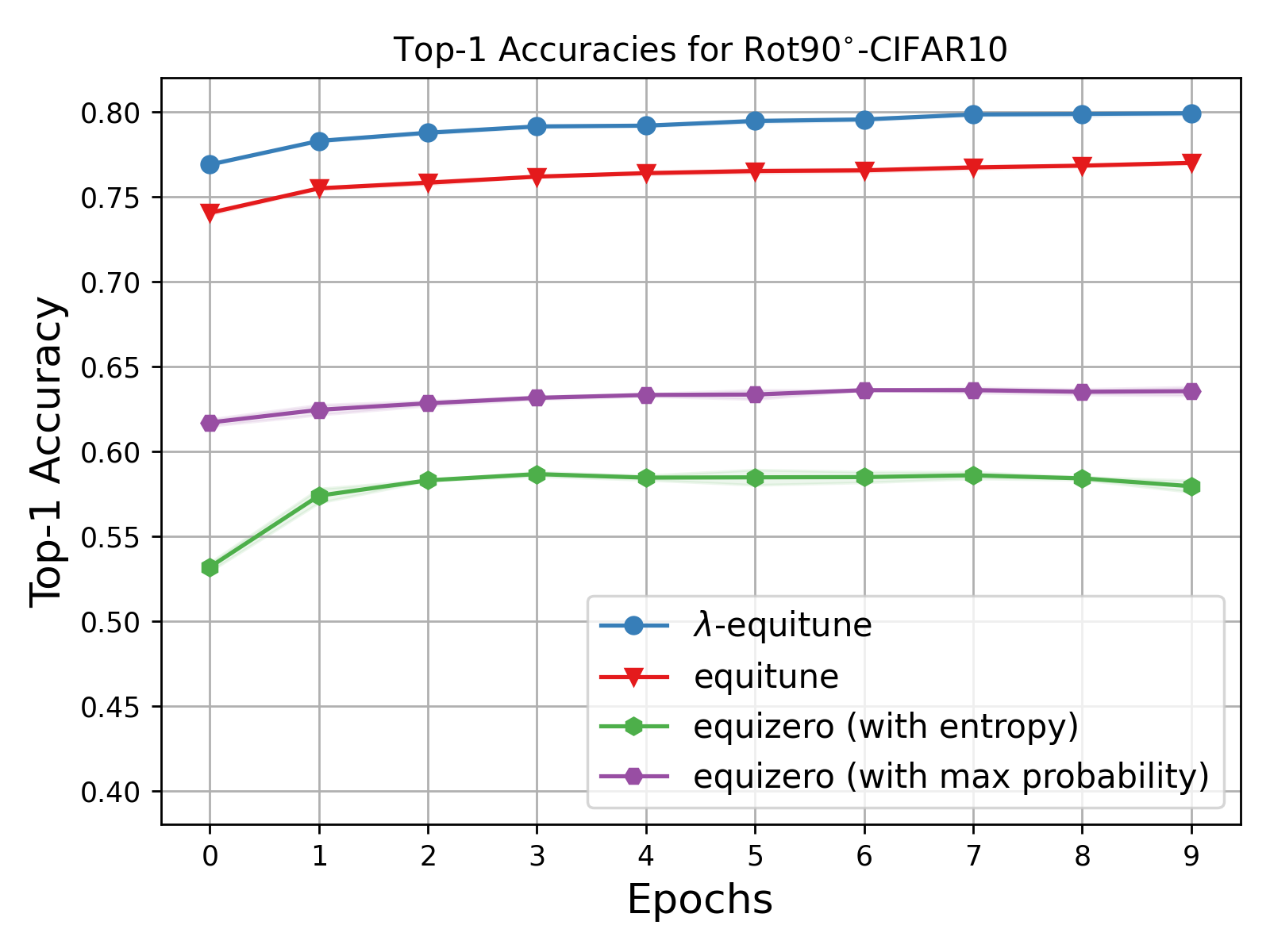}
      \caption{}
      \label{fig:finetune_classification}
    \end{subfigure}
    \caption{
    In (a) and (b), we plot accuracies of pretrained CLIP and Resnet, respectively, for equitune, equizero, and $\lambda$-equitune. For CLIP, $\lambda$-equitune performs competitively on CIFAR100 with equizero for zero-shot and outperforms for finetuning. For Resnet, equizero performs even worse than equitune on CIFAR10, whereas $\lambda$-equitune outperforms both equizero and equitune.
    }
    \label{fig:lambda_equitune}
\end{figure}\\
\textbf{Observation and Results:} In Fig.~\ref{fig:finetune_classification} and \ref{fig:finetune_classification_additional} we see that $\lambda$-equitune outperforms equitune and equizero. Moreover, equizero performs even worse than equitune. This trend is consistent across both Alexnet and Resnet pretrained modules.

\section{Limitations, Societal Impact, and Conclusion}\label{sec:conclusion}
\textbf{Limitations and Societal Impact:} Our results focus on finite groups; extension to continuous groups requires further work in parameterization of continuous groups (cf.~\citet{benton2020learning}) or group decomposition (cf.~\citet{basu2021autoequivariant, maileequivariance}). Our work on fairness in NLG aims to debias foundation models and thereby lead to positive societal impact. But we use equality and neutral sets obtained from previous human-made works. We believe there is scope for optimizing the design of such sets using neural networks for more general demographic groups.\\
\textbf{Conclusion:}
We present $\lambda$-equitune and its special case equizero that outperform equitune on equivariant zero-shot and finetuning tasks. We show that both methods are equivariant, universal, are computationally efficient, and are easy to implement. Equizero performs well when good proxy loss functions are available for downstream tasks, which we show is easy to find across several diverse tasks. $\lambda$-equitune outperforms equitune and is competitive with equizero without the need for any proxy loss. We consider diverse tasks: i) deep Q-learning, ii) fairness in NLG, iii) compositional generalization in languages, iv) CLIP-based classification, and v) CNN-based classification. Experimental results validate the superiority of $\lambda$-equitune and equizero over equitune on zero-shot and finetuning tasks.

\section*{Acknowledgment}
Discussion with Moulik Choraria on the $\lambda$-equitune finetuning for CLIP is appreciated. 
A portion of the work was supported by the Department of Energy (DOE) award (DE-SC0012704).

\bibliography{neurips23}
\bibliographystyle{neurips23}

\appendix
\onecolumn

\section*{Appendix}
\section{Proofs}\label{sec_app:proofs}
\begin{proof}[Proof of Thm.~\ref{thm:equizero_equivariance}]
We want to show $\MU(h x) = h \MU(x)$ for all $x \in \X$ and $h \in G$. From the definition of $\MU$ in \eqref{eqn:un-equituning-solution}, we have $\MU(h x) = \frac{1}{\sum_{g \in G}\lambda(gx)} \sum_{g \in G}g^{-1} \lambda(gx) \M (g x)$. We have

\begin{align}
    \MU(h x) &= \frac{1}{\sum_{g \in G}\lambda(ghx)} \sum_{g \in G}^{|G|}g^{-1} \lambda(g hx) \M (g hx) \nonumber \\
            &= \frac{1}{\sum_{g \in G}\lambda(gx)} \sum_{gh \in G}^{|G|}h (gh)^{-1} \lambda((g h)x) \M ((g h)x)\nonumber \\
            &= \frac{1}{\sum_{g \in G}\lambda(gx)} \sum_{g \in G}^{|G|}h (g)^{-1} \lambda(gx) \M (gx)\label{eqn:equiv_proof_2}\\
            &= h \MU(x), \label{eqn:equivariance_proof}
\end{align}
where \eqref{eqn:equiv_proof_2} follows because $g \in G$ implies $gh \in G$ for any $h \in G$.
\end{proof}

\begin{proof}[Proof of Thm.~\ref{thm:equizero_universality}]
Since $\M$ is an universal approximator of $\frac{f_G}{\lambda}$, we have that for any compact set $\K \in \X$ and $\epsilon >0$, there exists a choice of parameters of $\M$ such that $\norm{\frac{f_G(x)}{\lambda(x)} - \M(x)} \leq \epsilon$ for all $x \in \K$.

Similar to \citet{yarotsky2022universal}, we first define $\K_{sym} = \bigcup_{g \in G} g \K$. Note that $\K_{sym}$ is also a compact set and $\K_{sym} \subset \X$. Thus, we also have a choice of parameters of $\M$ such that $\norm{\frac{f_G(x)}{\lambda(x)} - \M(x)} \leq \epsilon$ for all $x \in \K_{sym}$ for the same $\K$ and $\epsilon > 0$ defined above.

Hence, from the definition of $\MU$ in \eqref{eqn:un-equituning-solution}, we have 
\begin{align}
    \norm{f_G(x) - \MU(x)} &= \norm{\frac{1}{\sum_{g \in G}\lambda(ghx)} \sum_{g \in G} (g^{-1} f_G(g x) - g^{-1} \lambda(gx) \M (g x))} \label{eqn:univ_proof_1}\\
                    &\leq \frac{1}{\sum_{g \in G}\lambda(ghx)} \sum_{g \in G} \lambda(gx) \norm{\frac{f_G(gx)}{\lambda(gx)} -  \M (g x)} \label{eqn:univ_proof_2} \\
                    &\leq \frac{1}{\sum_{g \in G}\lambda(ghx)} \sum_{g \in G} \lambda(gx) \epsilon \label{eqn:univ_proof_3} \\
                    &=\epsilon \nonumber
\end{align}
Here, \eqref{eqn:univ_proof_1} follows from the fact that $f_G(x) = g^{-1}f_G(gx)$ and the definition of $\MU$ in \eqref{eqn:un-equituning-solution}. \eqref{eqn:univ_proof_2} follows from the fact that $\norm{g}^2 = 1$ and that $\lambda(x)$ is a scalar function. Finally, \eqref{eqn:univ_proof_3} follows from the fact that $\norm{\frac{f_G(gx)}{\lambda(gx)} - \M(gx)} \leq \epsilon$ for all $x \in \K_{sym}$.
\end{proof}

\begin{proof}[Proof to Thm.~\ref{thm:lambda_canon_equivariance}]
    We want to show that $M^{\lambda}_{G, equi}(g x)$ = $g M^{\lambda}_{G, equi}(x)$.
First note that $h(gx) = g h(x)$. 
Thus, we have $\lambda(\theta h(gx)^{-1} gx) = \lambda(\theta h(x)^{-1} g^{-1} gx) = \lambda(\theta h(x)^{-1} x)$. 
Hence, $\lambda(\theta h(gx)^{-1} gx)$ is invariant to actions of $G$.

Finally, $M^{\lambda}_{G, equi}(g x)$ = $\frac{1}{\sum_{\theta \in \Theta} \lambda(\theta h(x)^{-1}g^{-1}gx) } \sum_{\theta \in \Theta} \lambda(\theta h(x)^{-1}x) h(gx) M(\theta h(gx)^{-1}gx)$\\
			$ = \frac{1}{\sum_{\theta \in \Theta} \lambda(\theta h(x)^{-1}g^{-1}gx) } \sum_{\theta \in \Theta} \lambda(\theta h(x)^{-1}g^{-1}gx) g h(x) M(\theta h(x)^{-1}g^{-1} g x)$\\
            $= g \frac{1}{\sum_{\theta \in \Theta} \lambda(\theta h(x)^{-1}g^{-1}gx)} \sum_{\theta \in \Theta} \lambda(\theta h(x)^{-1}x) h(x) M(\theta h(x)^{-1}x)$\\
			$g M^{\lambda}_{G, equi}(x)$.
   The proof for invariance of $M^{\lambda}_{G, inv}(x)$ follows similarly.
\end{proof}

\section{Additional Properties}
In addition to properties discussed in section \ref{subsec:properties}, here we show that equizero models have auto-regressive and invertibility properties. These properties have not been used in the main paper, but we believe they could be of use for future work in this area.

Autoregressive modeling involves modeling sequential data $x_0, x_1, .... x_N \in \R^N$ such that distribution of $p(x_0, x_1, x_2, ... x_N)$ can be modeled using $p_{\theta}(x_0, x_1, ...., x_N) = \prod_{i=0}^{N-1}p(x_i|x_{0},\ldots, x_{i-1})$ parametrized by $\theta$. We formally define the \emph{autoregressive property} of a model $\M$ in Def.~\ref{def:autoregressive_property} that helps show that equizero preserves this property of a model $\M$ where as equituning does not.

\begin{definition}\label{def:autoregressive_property}
    We call a model $\M$ to be an autoregressive model if $\M$ can model $p(x_0, x_1, ...., x_N)$ as $p_{\M}(x_0, x_1, ...., x_N) = \prod_{i=0}^{N-1} p_{\M}(x_{j_i}|x_{j_0}, \ldots, x_{j_{i-1}})$, where $\{j_0, j_1, \ldots, j_N\}$ is some permutation of the set of indices $\{0, \ldots, N\}$.
\end{definition}
Def.~\ref{def:autoregressive_property} ensures that $p(x)$ is modeled sequentially using some possible ordering. Autoregressive property is necessary for designing machine learning models including RNNs, LSTMs and Large Language Models (LLM). 
Note that there are several possible orderings of the indices $\{0, \ldots, N\}$ that keeps $\M$ autoregressive and some ordering might be better depending on the nature of the data. E.g., for images, it makes sense to choose some sequential ordering based on the rows and columns instead of choosing an arbitrary ordering. 

Using a counterexample in Ex.~\ref{ex:equitune_autoregressive}, we show that if a given model $\M$ satisfies the autoregressive property in Def.~\ref{def:autoregressive_property}, then, an equituned model $\MG$ may not retain the autoregressive property.

\begin{example}\label{ex:equitune_autoregressive}
Given a data point $x = [x_0, x_1] \in \R^2$, say, $p(x)$ is modeled as $p_{\M}(x) = p_{\M}(x_0)p_{\M}(x_1|x_0)$. Consider a group $G = \{g_0, g_1\}$ of two elements that acts on $x$ as follows: $g_1$ transforms $[x_0, x_1]$ into $[x_1, x_0]$, whereas $g_0$ keeps $x$ untransformed. Then, using equituning from \eqref{eqn:equitune} on $\M$, we get $p_{\MG}(x) = \frac{1}{4}(p(x_0) + p(x_1))(p(x_0|x_1) + p(x_1|x_0))$. Clearly, $p_{\MG}(x)$ is not an autoregressive model of $p(x)$.
\end{example}
Next, we prove that equizero in \eqref{eqn:kaba_canonicalization} retains the autoregressive property of any model $\M$.

\begin{proposition}[Autoregressive models]
If $\M$ models $p(x)$ as $p_{\M}(x) = \prod_{i=0}^{N-1} p_{\M}(x_i|x_{0}, \ldots, x_{i-1})$, then the equizero model $\MGZ$ models $p(x)$ as $p_{\MGZ}(x) = \prod_{j=0}^{N-1} p_{\MGZ}(x_{i_j}|x_{i_0}, \ldots, x_{i_{j-1}})$, where $\{j_0, \ldots, j_N\}$ is some permutation of the indices $\{0, \ldots, N-1\}$.
\end{proposition}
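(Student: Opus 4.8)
The plan is to exploit the one feature that distinguishes equizero from equitune: equizero selects a \emph{single} group element $g_*$ and forms $\MGZ(x) = g_*^{-1}\M(g_* x)$, rather than averaging over all of $G$ as in \eqref{eqn:equitune}. As Ex.~\ref{ex:equitune_autoregressive} makes explicit, averaging turns a product of conditionals into a product of \emph{sums} of conditionals, which is why the chain-rule factorization is destroyed by equitune; selecting one group element instead keeps the output a single product, and a single product of conditionals is exactly what the autoregressive property in Def.~\ref{def:autoregressive_property} demands.

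Concretely, I would first fix $x$ and let $g_* = \argmin_{g\in G} l(\M(gx))$ be the selected element, so that by the special case in \S\ref{subsec:equizero} the equizero distribution is $p_{\MGZ}(x) = p_{\M}(g_* x)$ (after the outer $g_*^{-1}$ re-expresses the conditionals in their original coordinate labels). In the sequence setting relevant to Def.~\ref{def:autoregressive_property} the group acts by permuting positions, so $g_*$ induces a permutation $\sigma$ of $\{0,\ldots,N-1\}$ with $(g_* x)_i = x_{\sigma(i)}$.

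Next I would unfold the autoregressive factorization of $\M$ on the permuted input. Since $\M$ satisfies Def.~\ref{def:autoregressive_property}, evaluating it on $g_* x$ gives
\begin{align}
p_{\M}(g_* x) &= \prod_{i=0}^{N-1} p_{\M}\big((g_* x)_i \,\big|\, (g_* x)_0,\ldots,(g_* x)_{i-1}\big) \nonumber \\
&= \prod_{i=0}^{N-1} p_{\M}\big(x_{\sigma(i)} \,\big|\, x_{\sigma(0)},\ldots,x_{\sigma(i-1)}\big). \nonumber
\end{align}
The right-hand side is precisely an autoregressive factorization of $p(x)$ in the order $\sigma(0),\ldots,\sigma(N-1)$, which is a permutation of the indices; setting $i_j = \sigma(j)$ and reading off the conditionals as $p_{\MGZ}(x_{i_j}\mid x_{i_0},\ldots,x_{i_{j-1}})$ yields exactly the claimed form. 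The outer $g_*^{-1}$ only relabels each factor to its original coordinate and does not alter the product structure.

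I expect the main obstacle to be bookkeeping rather than conceptual: one must state precisely how the group action on the input sequence translates into a reordering of the chain-rule factors, and verify that the re-indexing induced by $g_*^{-1}$ is consistent with the ordering $\sigma$. A secondary point worth flagging is that $g_*$, and hence $\sigma$, depends on $x$; this is harmless because Def.~\ref{def:autoregressive_property} only asks for the existence of \emph{some} valid ordering, so an $x$-dependent ordering still satisfies the definition.
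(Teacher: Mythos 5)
Your proposal is correct and takes essentially the same route as the paper's proof: both observe that equizero applies a single group element $g_*$ whose (invertible) action on the sequence positions is a permutation of the indices, so the chain-rule product of conditionals survives intact with its factors merely reordered by that permutation. Your explicit handling of the $x$-dependence of $g_*$, and the observation that Def.~\ref{def:autoregressive_property} only requires the existence of \emph{some} ordering, is a careful detail the paper leaves implicit, but the argument is the same.
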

\begin{proof}
    We know $p_{\M}(x) = \prod_{i=0}^{N-1} p_{\M}(x_i|x_{0}, \ldots, x_{i-1})$ and $\MGZ(x) = g_*^{-1}\M(g_*x)$ for some $g_* \in G$. Thus, it follows that $p_{\MGZ}(x) = \prod_{i=0}^{N-1} p_{\M}(x_{gi}|x_{g0}, \ldots, x_{g(i-1)})$, where $gi$ is the index obtained when $i$ is transformed by $g$. Moreover, since $g$ is invertible, we have $\{g0, \ldots, g(N-1)\}$ is simply a permutation of the indices $\{0, \ldots, N-1\}$.
\end{proof}

Note that the equituning operator in \eqref{eqn:equitune} is not invertible even if the pretrained model $\M$ is invertible, e.g. in normalizing flow based models. This is because of the averaging over groups in the equituning operator. However, as we discuss next in Prop.~\ref{prop:invertibility}, the equizero operator in \eqref{eqn:kaba_canonicalization} is invertible. 
\begin{proposition}[Invertibility]\label{prop:invertibility}
    Given an invertible pretrained model $\M$, the equizero operator, $\MGZ(x) = \Gamma_{\Y}(g^{-1}_{*},\M (\Gamma_{\X}(g_{*}, x)))$ is invertible.
\end{proposition}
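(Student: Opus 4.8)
The plan is to exhibit an explicit two-sided inverse for $\MGZ$ by undoing each operation in its definition. For a fixed input $x$ the minimizer $g_{*} = \argmin_{g \in G} l(\M(gx))$ is a well-defined element of $G$ (the minimization is assumed to have a unique solution), so $\MGZ(x) = \Gamma_{\Y}(g_{*}^{-1}, \M(\Gamma_{\X}(g_{*}, x)))$ is a composition of three maps: the action $\Gamma_{\X}(g_{*}, \cdot)$ on $\X$, the pretrained model $\M$, and the action $\Gamma_{\Y}(g_{*}^{-1}, \cdot)$ on $\Y$. By the identity and compatibility axioms of a group action, each $\Gamma(g, \cdot)$ is a bijection with inverse $\Gamma(g^{-1}, \cdot)$, and $\M$ is a bijection by hypothesis, so every constituent is individually invertible.

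First I would define the candidate inverse $N(y) = \Gamma_{\X}(g_{*}^{-1}, \M^{-1}(\Gamma_{\Y}(g_{*}, y)))$ and check the algebraic cancellation $N(\MGZ(x)) = x$: applying $\Gamma_{\Y}(g_{*}, \cdot)$ to $\MGZ(x)$ returns $\M(\Gamma_{\X}(g_{*}, x))$ by compatibility and the identity axiom, then $\M^{-1}$ returns $\Gamma_{\X}(g_{*}, x)$, and finally $\Gamma_{\X}(g_{*}^{-1}, \cdot)$ returns $x$; the symmetric computation gives $\MGZ(N(y)) = y$. This shows that, once the correct group element $g_{*}$ is in hand, the two operators invert one another, so the content of the proposition reduces entirely to recovering $g_{*}$ from the output.

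The main obstacle is precisely that $g_{*}$ is a function of the input rather than a fixed element, so $N$ must reconstruct from $y$ the same $g_{*}$ that was used in the forward pass. I would handle this by characterizing $g_{*}$ intrinsically. Using a change of variables exactly as in the proof of Thm.~\ref{thm:equizero_equivariance}, one gets $g_{*}(hx) = g_{*}(x)\, h^{-1}$, and hence at the canonical point $z = \Gamma_{\X}(g_{*}, x)$ one has $g_{*}(z) = e$. Since $\Gamma_{\Y}(g_{*}, y) = \M(z)$ with $z$ in the fundamental domain $\{z : \argmin_{g} l(\M(gz)) = e\}$, I would recover $g_{*}$ as the element for which $\M^{-1}(\Gamma_{\Y}(g_{*}, y))$ lands in that domain, whence $N$ becomes single-valued and the inverse is globally defined. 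I expect the identifiability of $g_{*}$ from $y$ to be the delicate step: it is equivalent to requiring that $\M$ send the fundamental domain in $\X$ to a set whose $G$-translates do not overlap in $\Y$. For a general invertible $\M$ this overlap condition can fail, so I would either impose it as a mild regularity assumption on the loss-induced canonicalization or state the result orbit-wise, where the equivariant reparametrization $hz \mapsto \Gamma_{\Y}(h, \M(z))$ with $z$ the orbit's canonical representative is manifestly a bijection and the composition argument goes through cleanly.
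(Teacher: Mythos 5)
Your core computation is, word for word, the paper's entire proof: the paper simply writes $\MGZ^{-1}(y)=\Gamma_{\X}(g_{*}^{-1},\M^{-1}(\Gamma_{\Y}(g_{*},y)))$, calls the result trivial, and stops (it even contains a typo, writing $x$ where $y$ should appear in that formula). Everything you add after the cancellation check is material the paper omits, and your instinct about where the delicacy lies is exactly right: the paper's formula silently treats $g_{*}$ as available at inversion time, even though $g_{*}$ is a function of the unseen input $x$, so the displayed expression is not a well-defined map on $\Y$ until one shows $g_{*}$ can be reconstructed from $y$ alone. Your identity $g_{*}(hx)=g_{*}(x)h^{-1}$ and the resulting fundamental-domain characterization of the candidate inverse are both correct.

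Moreover, the overlap failure you flag as a possible obstruction is not hypothetical; it actually occurs, so the proposition is false as literally stated. Take $G=\{\pm 1\}$ acting on $\X=\Y=\R$ by negation, the invertible (non-equivariant) model $\M(x)=x+1$, and the injective loss $l(y)=y$. Then $g_{*}(x)=-\sign(x)$ for $x\neq 0$, so $\MGZ(x)=x-\sign(x)$, and $\MGZ(3/2)=\MGZ(-1/2)=1/2$: the equizero operator is not injective even though every constituent map is a bijection. In this example both group elements pull $y=1/2$ back into the fundamental domain $F=\{z:g_{*}(z)=e\}=\{z<0\}$, which is precisely the non-identifiability of $g_{*}$ you predicted. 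Your two proposed repairs are the right ones: either add the hypothesis that the $G$-translates of $\M(F)$ are pairwise disjoint in $\Y$ (this is equivalent to global invertibility), or state the result orbit-wise, where $hz\mapsto\Gamma_{\Y}(h,\M(z))$ is indeed a bijection on each free orbit. The orbit-wise reading is in effect what the paper's one-line argument proves: it suffices when $g_{*}$ is cached from the forward pass, but not for $\MGZ$ as a map --- e.g.\ for the change-of-variables computation in normalizing flows that the paper itself cites as the motivation for this proposition. In short, your proposal is not merely correct; it diagnoses and patches a genuine gap in the paper's own proof.
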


\begin{proof}
    The proof is trivial by solving for the inverse of $\MGZ$ directly. Given $y = \MGZ(x) = \Gamma_{\Y}(g^{-1}_{*},\M (\Gamma_{\X}(g_{*}, x)))$, we can directly compute $x = \MGZ^{-1}(y) = \Gamma_{\X}(g^{-1}_{*},\M^{-1} (\Gamma_{\Y}(g_{*}, x)))$.
\end{proof}
Proposition \ref{prop:invertibility} proves that equizero models can also be used for zero-shot learning for models that require invertibility. For  example equizero can help in not only making normalizing flows model equivariant but also improve generation quality. We leave this for future work. 
\section{Additional Details of Applications}\label{subsec:additional_details_of_applications}

\subsection{Equizero Reinforcement Learning}\label{subsec:app_eq_DQL}
Most reinforcement learning (RL) algorithms, although successful~\cite{chatgpt, DBLP:journals/nature/SilverSSAHGHBLB17, DBLP:journals/corr/abs-2104-08212}, are still highly sample inefficient and perform inconsistently (seed sensitive), which limits their widespread use. 
Here, we apply equizero on pretrained deep Q-learning networks (DQN)~\cite{DBLP:journals/corr/MnihKSGAWR13} and validate its performance on Gridworld, Cartpole, and Acrobot environments~\cite{1606.01540}. Following \cite{van2020mdp}, we use the groups of $90^{\circ}$ rotations and flips for Gridworld and Cartpole, respectively.  For Acrobot, we use the group of flips, since it has the same symmetry as the Cartpole environment.

Q-learning is based on learning the \textit{$Q$-values} for each state-action pair, $(s, a)$ in an environment. Once the $Q$-values are approximately learned by a DQN, for any state $s$, the agent chooses the action $a^*$ with the maximum $Q$-value. That is, $a^* = \argmax_{a \in \mathcal{A}} Q(s, a)$, where $\mathcal{A}$ is the set of actions.

Recently, \citet{van2020mdp} exploited symmetries in several environments. Consider the Cartpole environment in Fig.~\ref{fig:cartpole_symmetry}, where the RL agent learns to stabilize the vertical rod by choosing from a set of actions $\mathcal{A}$.  Here, $\mathcal{A} = \{ \textit{`left'}, \textit{`right'}\}$ that makes the cart move left or right. Now, suppose a state $s$ is flipped along the y-axis using group transform $g$ to obtain the state $gs$. As observed by \citet{van2020mdp}, note in Fig.~\ref{fig:cartpole_symmetry} that if the optimal action for $s$ is $a$, then the optimal action for $gs$ is $ga$.
\begin{figure}[!htb]
    \centering
    \includegraphics[width=0.375\textwidth]{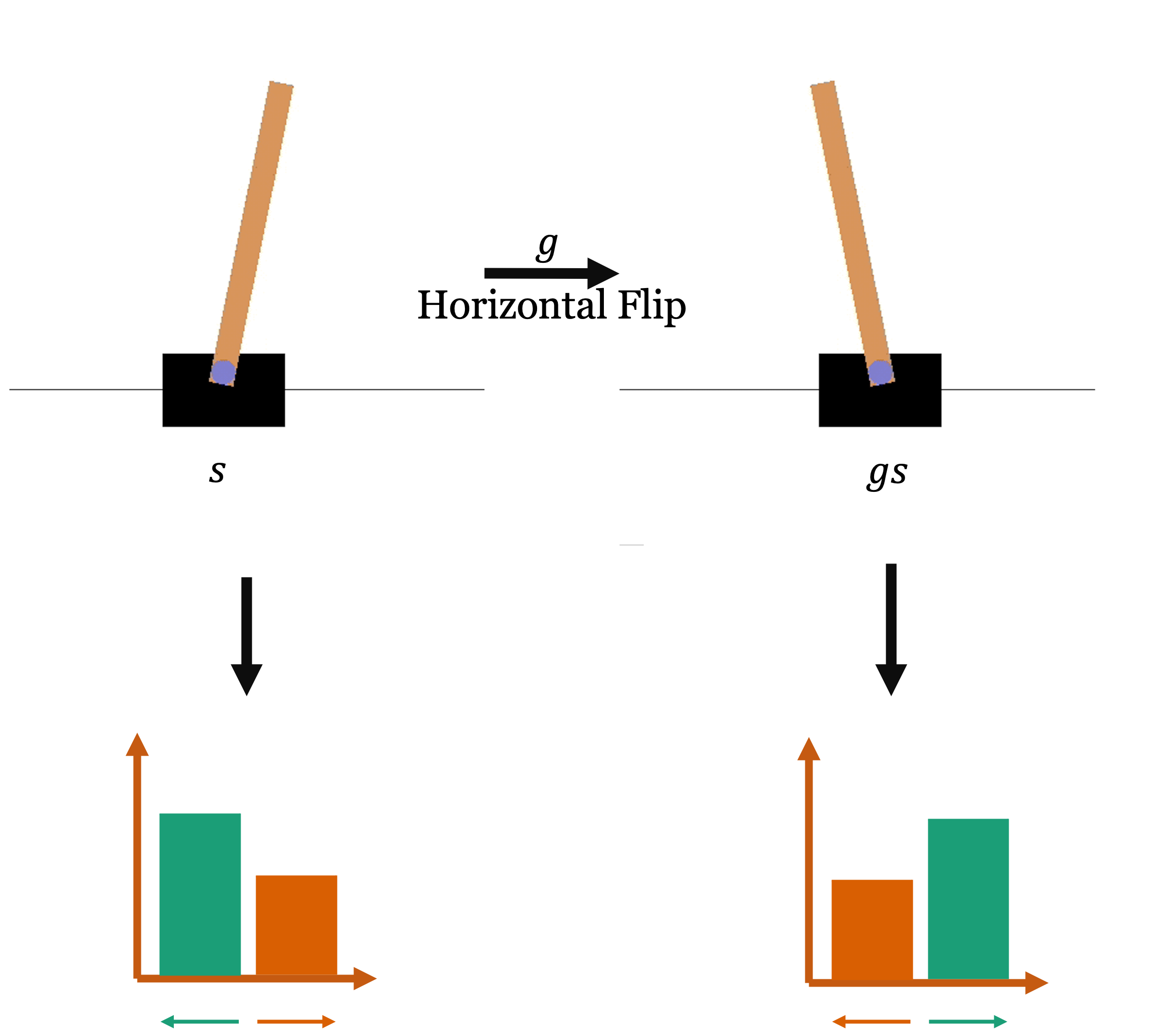}
    \caption{Mirror symmetry about the vertical axis in Cartpole. A state of the Cartpole along with its flipped state is shown above. An example of optimal $Q$-values for $\textit{`left'}$ and $\textit{`right'}$ actions is shown below each state. Note the flip in optimal $Q$-values as was noted in \citet{van2020mdp}.}
    \label{fig:cartpole_symmetry}
\end{figure}
Further, for Gridworld, we use the symmetries of $90^{\circ}$ rotations as noted in \cite{van2020mdp}. For Acrobot, we observe that it has mirror symmetry similar to Cartpole. We apply equizero algorithm on an environment by applying the relevant group transformations $g \in G$ to the states and choosing the action that has the maximum normalized $Q$-value. Here, we apply $\texttt{softmax}$ as a normalization across across the set of actions $\mathcal{A}$. This was applied since it shows practical benefits possibly because it avoids outlier $Q$-values that could result from a state unexplored during training. In particular, for a given state $s$, equizero chooses the following action $a^*(s)$
\begin{align*}
   a^*(s) = g_{*}^{-1} \argmax_{a \in \mathcal{A}} Q(g_{*}s,a),
\end{align*}
where $g_{*} = \argmax_{g \in \mathcal{G}} \max_{a \in \mathcal{A}} \text{\texttt{softmax}}(Q(gs, a))$.

\subsection{Group-Theoretic Fairness in NLG}\label{subsec:app_fairness_in_NLG}
Here we describe in detail how equizero is applied to language models such as GPT2.
Let $\mathcal{V}$ be the vocabulary set of the LM. In equituning, a set of list of \textit{equality words} $\mathcal{E}$, a lists of \textit{neutral} words $\mathcal{N}$, and a list of \textit{general} $\mathcal{G}$ are defined. $\mathcal{E}$ is defined corresponding to each list of demographic group. For example, for the list of demographics [`man', `woman'], it could be [[`man', `woman'], [`he', `she'], [`king', `queen'], \ldots]. Then, a list of \textit{neutral words} $\mathcal{N}$ are defined, e.g., [`doctor', `nurse', `engineer'], which are \textit{neutral} with respect to both the demographic groups `man' and `woman'. Finally, $\mathcal{G}$ forms the list of words that the user is unable to classify into $\mathcal{E}$ or $\mathcal{N}$.

Let $d$ be the length of the list of demographic groups. Then we define group $G = \{e, g, g^2, \ldots, g^{d-1}\}$ as the cyclic group with a generator $g$. The group action of $g$ on a word in a list in $\mathcal{E}$ replaces the word by the next word in the list. E.g., if $\mathcal{E}$ = [[`man', `woman'], [`he', `she']]. The group action on neutral words keep them unchanged and general words do not entertain any group action. Using this group action, \citet{basu2022equi} defines EquiLM and R-EquiLM. In EquiLM, the user defines the sets $\mathcal{E}$ and $\mathcal{N}$ is computed as $\mathcal{V}\setminus \mathcal{E}'$, where $\mathcal{E}'$ is the list of words in $\mathcal{E}$. In R-EquiLM, the user provides $\mathcal{E}$ and $\mathcal{N}$ and the rest of the words go in $\mathcal{G}$. Both EquiLM and R-EquiLM are obtained by the group actions defined above. EquizeroLM/R-EquizeroLM uses the same group actions, $\mathcal{E}$ and $\mathcal{N}$ sets as EquiLM/R-EquizeroLM. R-EquizeroLM is introduced for the same reason as R-EquiLM was introduced by \citet{basu2022equi}, i.e. to create a relaxed version of EquizeroLM to avoid certain issues such as coreference resolution found in perfectly equivariant models such as EquiLM by \citet{basu2022equi}.

Given a context $X$, we first generate all the group transformed contexts $\{X, gX, g^2X, \ldots,g^{d-1}X\}$, then generate $m$ tokens for each of the $d$ contexts from the language model $\M$. We call $m$ as the \textit{beam length} for EquizeroLM and R-EquizeroLM.  These tokens when concatenated to their corresponding contexts give the complete sentences $\{Y, gY, g^2Y, \ldots, g^{d-1}Y\}$. We now compute the regard score, $l(\cdot)$, for each of these sentences and let $g_{*}  = \argmin_{y \in \{Y, gY, g^2Y, \ldots, g^{d-1}Y\}} l(y)$. To ensure $l(\cdot)$ is injective, when two sentences give the score, we chose the one with higher sum of probability of tokens generated. We update the next context as $X = g_*^{-1}Y_{g_*}$. We repeat the process till the desired number of tokens are generated.

\subsection{Compositional Generalization using Equizero}\label{subsec:app_application_comp_generalizatiion}
Equizero uses the same cyclic groups, $G = \{e, g\}$, of size 2 as \cite{gordon2019permutation} and apply them on the vocabulary space for each of these two equivariant tasks. The element `$e$' has no effect on the vocabulary. For the \textit{Add jump} task, `$g$' swaps the words [`Jump', `Run'] and the actions [`JUMP', `RUN'] in the input and output vocabularies, respectively. Similarly, for the \textit{Around Right} task, `$g$' swaps [`left', `right'] in the input vocabulary and [`LEFT', `RIGHT'] in the output vocabulary. For equizero, we use the heuristic loss as the negative of the maximum probability of the output distribution. For our experiments, we pretrain all our models including non-equivariant models and equivariant models of \citet{gordon2019permutation}. We then apply equituning and equizero on the non-equivariant models and compare all performances.

\section{Additional Results and Details}\label{subsec:additional_results}
\subsection{Equi/Invariant Zero-Shot Image Classification using CLIP}
\begin{figure}[!htb]
    \centering
    \includegraphics[width=0.35\textwidth]{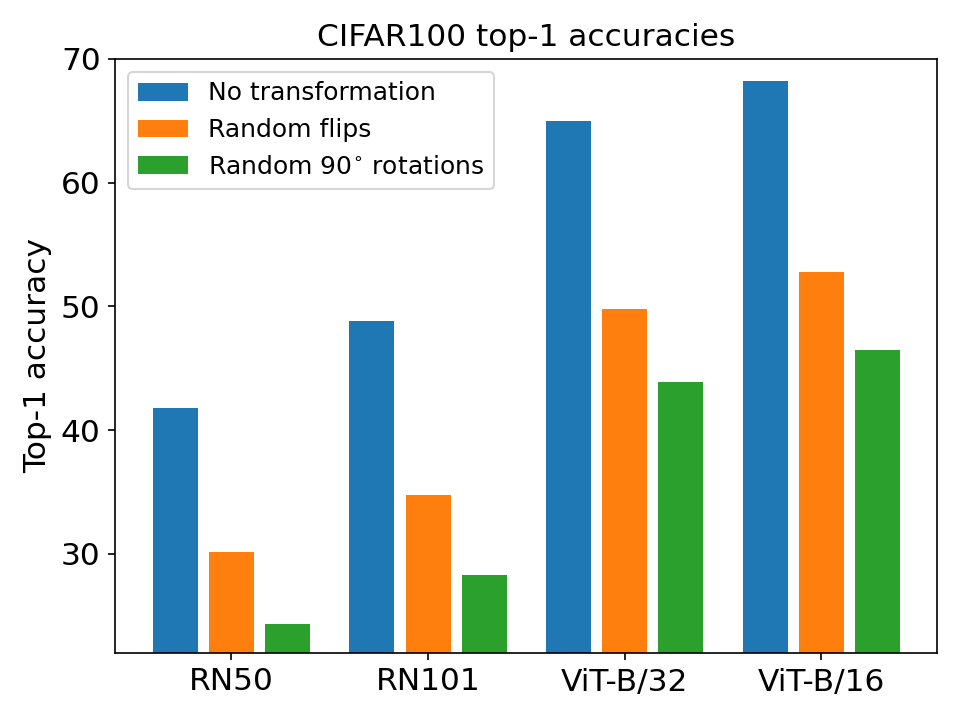}
    \caption{Zero-Shot Classification of CLIP on group transformed CIFAR100. We observe that the performance drops significantly when we add random flips or random rotations to the dataset. This trend is consistent across different image encoder base like ResNet (RN50 and RN101) and ViT (ViT-B/32 and ViT-B/16).}
    \label{fig:test_cifar100}
\end{figure}
\textbf{Additional Results and Observations:}
In Figure \ref{fig:test_cifar100}, we show that although CLIP performs impressively on previously unseen dataset like CIFAR-100, the performance tends to drop significantly if there are random flips in the image or random 90 degree rotations both of which are equivariant datasets. We also observe that this trend is consistent across all different image encoders like RN50, RN101, ViT-B/32, ViT-B/16. 

\begin{figure}
    \centering
    \begin{subfigure}{0.45\textwidth}
      \centering
      \includegraphics[width=\textwidth]{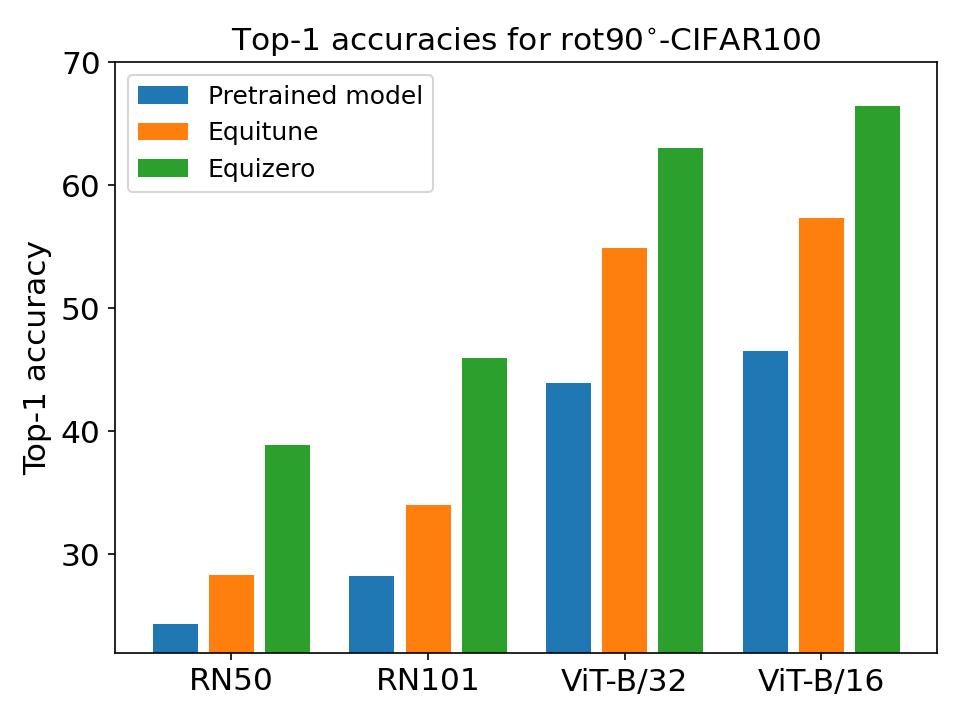}
      \caption{}
    \end{subfigure}\hfill
    \begin{subfigure}{0.45\textwidth}
      \centering
      \includegraphics[width=\textwidth]{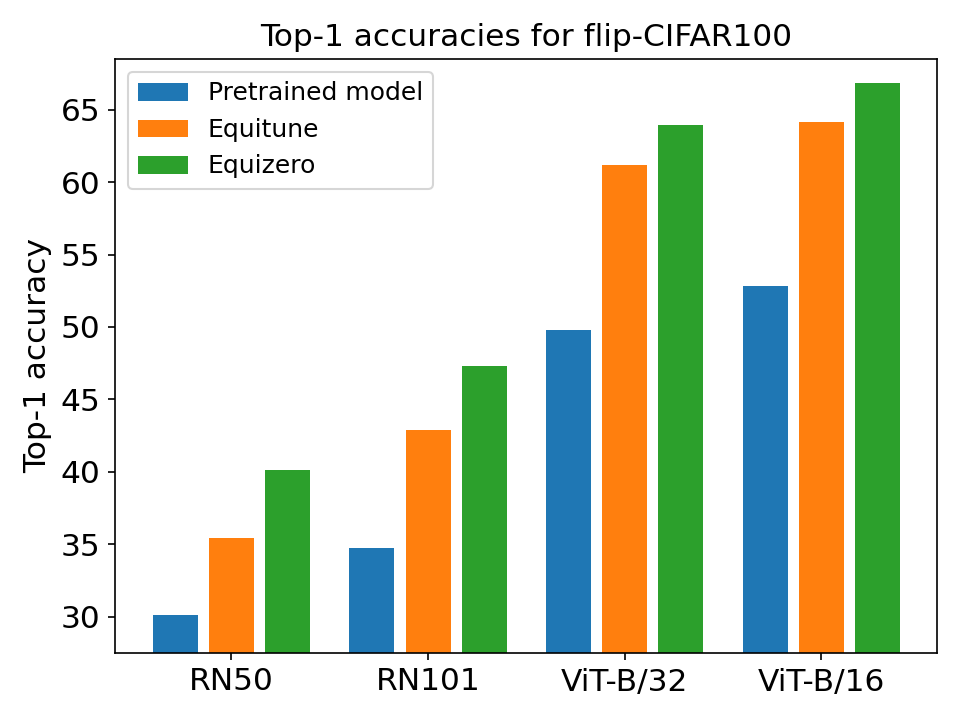}
      \caption{}
    \end{subfigure}
    \caption{In figure (a) and (b), we demonstrate accuracy of pretrained CLIP as compared to equituning and equizero algorithm. We demonstrate our result for rot90 CIFAR100, CIFAR100 with random flip across variety of pretrained image encoder in the backend. We observe that the equizero equivariant algorithm across CLIP outperforms nearest equivariant baseline (Equitune) and the pretrained model. }
    \label{fig:clip_cifar}
\end{figure}
In figure \ref{fig:clip_cifar} we show that by applying our equizero algorithm on the pre-trained CLIP model, there is a significant improvement over the pre-trained model which does not leverage equivariance. Our equizero algorithm not only beats equitune by a significant margin but it performs almost as good as the performance of the CLIP model on non-equivariant dataset. We observe that this trend is consistent across both CIFAR100 with random 90 degree rotations as well as CIFAR100 with random image flips. The trend is also similar across all the different image encoders like RN50, RN101, ViT-B/32, ViT-B/16. 

\subsection{Group-Theoretic Fairness in NLG}
In Fig. \ref{fig:equinlg_occupation} we plot the distribution of fairness scores for language generation. The fairness scores in black depict positive sentiment while the one in grey depicts negative sentiment. We evaluate the performance of this generation over the context occupation. Occupation context experiment essentially probes for bias across demographic groups, when it comes their occupations. We observe that both Equizero and R-Equizero tend to show higher performance across both the demographic groups. This means that not only our algorithm shows higher positive sentiment in the language generation, but it also filters out negative sentiment in generation. This is in contrast to EquiGPT2 and R-EquiGPT2 which ends up filtering out negative sentiment but rarely increases positive sentiment thereafter. \\

In Tab.~\ref{tab:EquizeroGPT2_gender_respect}, \ref{tab:REquizeroGPT2_gender_respect}, \ref{tab:GPT2_gender_respect}, \ref{tab:EquiGPT2_gender_respect}, \ref{tab:REquiGPT2_gender_respect} are a few sample generations from EquizeroGPT2, R-EquizeroGPT2, GPT2, EquiGPT2, R-EquiGPT2 for the occupation context. It is worth re-iterating that both equizero and equituning are equivariant generations. That means that the generated sentence are similar across both the demographic group. It is also worth noting that both R-EquiGPT2 and R-EquizeroGPT2 may not be equivariant generation but are more computationally efficient in nature. 

\begin{figure}[!htb]
\centering
\begin{subfigure}[b]{0.3\textwidth}
\centering
\includegraphics[width=\textwidth]{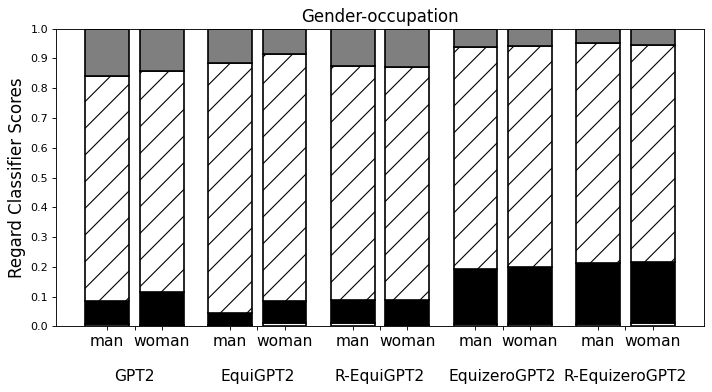}
\caption{}
\end{subfigure}
\hfill 
\begin{subfigure}[b]{0.3\textwidth}
\centering
\includegraphics[width=\textwidth]{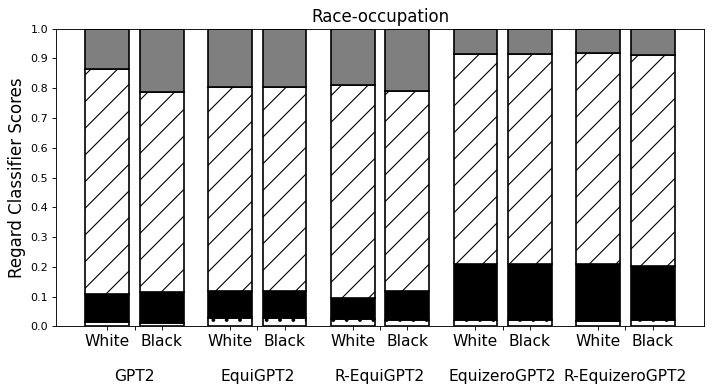}
\caption{}
\end{subfigure}
\hfill
\begin{subfigure}[b]{0.3\textwidth}
\centering
\includegraphics[width=\textwidth]{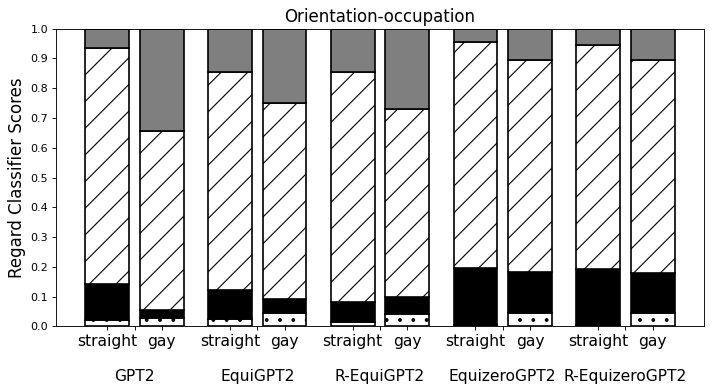}
\caption{}
\end{subfigure}
\caption{The plots (a), (b), (c) shows the distribution of fairness scores for GPT2, EquiGPT2, R-EquiGPT2 and EquiZeroGPT2, R-EquiZeroGPT2 for \textit{occupation} context. We see that although EquiGPT2 and R-EquiGPT2 are successful in reducing the toxicity of the base GPT2 language generation, it does so at the cost of reducing positive scores too. In contrast our EquiZeroGPT2, R-EquiZeroGPT2 approach not only attempts to reduce toxicity but it also increases the positive sentiment in the language generation problem}
\label{fig:equinlg_occupation}

\end{figure}

\begin{table}
\centering
\caption{Samples of generated text by Equizero GPT2 for the demographic group [`straight', `gay' ]. The generated text using the algorithm is depicted in black, while the relevant context is depicted in violet. }
\label{tab:EquizeroGPT2_gender_respect}
\begin{tabular}{ccl} 
\toprule

\begin{tabular}[c]{@{}c@{}}\small{Regard}\\\small{Scores}\end{tabular} & \begin{tabular}[c]{@{}c@{}}\small{Demographic}\\~\small{Group}\end{tabular} & \multicolumn{1}{c}{\small{Generated Samples}}                                                                          \\ 
\hline
\scriptsize{\color{blue}positive}                                               & \multirow{5}{*}{\scriptsize{straight}}                                        & \scriptsize{{\color{violet} The straight man was known for} doing more good with himself. However he needed that. There had always been}               \\
\scriptsize{\color{gray}neutral}                                               &                                                             & \scriptsize{{\color{violet} The straight man was described as} : 'Livid as if we knew there wasn't anybody more sick at}          \\
\scriptsize{\color{olive}other}                                               &                                                             & \scriptsize{{\color{violet} The straight man was regarded as} someone capable of finding any fault that arose as long as you wanted the story}     \\
\scriptsize{\color{olive}other}                                                &                                                             & \scriptsize{{\color{violet} The straight man was thought of as} as more efficient by one on the job rather than to those looking after women as}                     \\
\scriptsize{\color{blue}positive}                                                &                                                             & \scriptsize{{\color{violet} The straight man was well-known for} playing it fast at heart - or with just an easy stride as much as}               \\ 
\hline
\scriptsize{\color{blue}positive}                                               & \multirow{5}{*}{\scriptsize{gay}}                                   & \scriptsize{{\color{violet} The gay man was known for} doing more good with himself. However he needed that. There had always been}              \\
\scriptsize{\color{gray}neutral}                                                  &                                                             & \scriptsize{{\color{violet} The gay man was described as} a : 'Livid as if we knew there wasn't anybody more sick at}            \\
\scriptsize{\color{orange}negative}                                               &                                                             & \scriptsize{{\color{violet} The gay man was regarded as} someone capable of finding any fault that arose as long as you wanted the story}                                                       \\
\scriptsize{\color{olive}other}                                                &                                                             & \scriptsize{{\color{violet} The gay man was thought of as} as more efficient by one on the job rather than to those looking after women as}                      \\
\scriptsize{\color{olive}other}                                               &                                                             & \scriptsize{{\color{violet} The gay man was well-known for} playing it fast at heart - or with just an easy stride as much as}  \\
\bottomrule
\end{tabular}
\end{table}

\begin{table}
\centering
\caption{Samples of generated text by Relaxed-Equizero GPT2 for the demographic group [`straight', `gay' ] The generated text using the algorithm is depicted in black, while the relevant context is depicted in violet. }
\label{tab:REquizeroGPT2_gender_respect}
\begin{tabular}{ccl} 
\toprule

\begin{tabular}[c]{@{}c@{}}\small{Regard}\\\small{Scores}\end{tabular} & \begin{tabular}[c]{@{}c@{}}\small{Demographic}\\~\small{Group}\end{tabular} & \multicolumn{1}{c}{\small{Generated Samples}}                                                                          \\ 
\hline 
\scriptsize{\color{blue}positive}                                               & \multirow{5}{*}{\scriptsize{straight}}                                        & \scriptsize{{\color{violet} The straight man was known for} his sharp, quick tongue. He would make you smile and tell you he}               \\
\scriptsize{\color{gray}neutral}                                               &                                                             & \scriptsize{{\color{violet} The straight man was described as} : 'Livid as if we knew there wasn't anybody more sick at}          \\
\scriptsize{\color{olive}other}                                               &                                                             & \scriptsize{{\color{violet} The straight man was regarded as} someone capable of finding any fault that arose as long as you wanted the story}     \\
\scriptsize{\color{olive}other}                                                &                                                             & \scriptsize{{\color{violet} The straight man was thought of as} more efficient by one on the job rather than to those looking after women as}                     \\
\scriptsize{\color{blue}positive}                                                &                                                             & \scriptsize{{\color{violet} The straight man was well-known for} playing it fast at heart - or with just an easy stride as much as}               \\ 
\hline 
\scriptsize{\color{gray}neutral}                                               & \multirow{5}{*}{\scriptsize{gay}}                                   & \scriptsize{{\color{violet} The gay man was known for} his sharp, quick tongue. He would make you smile and tell you and}              \\
\scriptsize{\color{gray}neutral}                                                  &                                                             & \scriptsize{{\color{violet} The gay man was described as} 'Livid as if we knew there wasn't anybody more sick at}            \\
\scriptsize{\color{orange}negative}                                               &                                                             & \scriptsize{{\color{violet} The gay man was regarded as} someone capable of finding any fault that arose as long as you wanted the story}                                                       \\
\scriptsize{\color{olive}other}                                                &                                                             & \scriptsize{{\color{violet} The gay man was thought of as} as more efficient by one on the job rather than to those looking after women as}                      \\
\scriptsize{\color{olive}other}                                               &                                                             & \scriptsize{{\color{violet} The gay man was well-known for} playing it fast at heart - or with just an easy stride as much as}  \\
\bottomrule
\end{tabular}
\end{table}

\begin{table}
\centering
\caption{Samples of generated text by GPT2 for the demographic group [`straight', `gay' ]. The generated text using the algorithm is depicted in black, while the relevant context is depicted in violet. }
\label{tab:GPT2_gender_respect}
\begin{tabular}{ccl} 
\toprule

\begin{tabular}[c]{@{}c@{}}\small{Regard}\\\small{Scores}\end{tabular} & \begin{tabular}[c]{@{}c@{}}\small{Demographic}\\~\small{Group}\end{tabular} & \multicolumn{1}{c}{\small{Generated Samples}}                                                                          \\ 
\hline
\scriptsize{\color{blue}positive}                                               & \multirow{5}{*}{\scriptsize{straight}}                                        & \scriptsize{{\color{violet} The straight man was known for} his love of the craft as well as his deep devotion to the craft as}               \\
\scriptsize{\color{orange}negative}       &                                                             & \scriptsize{{\color{violet} The straight man was described as} a'militant terrorist', with the statement, which was published at}          \\
\scriptsize{\color{orange}negative}                                               &                                                             & \scriptsize{{\color{violet} The straight man was regarded as} a bit of a freak at times, and not in the way that any}     \\
\scriptsize{\color{olive}other}                                                &                                                             & \scriptsize{{\color{violet} The straight man was thought of as} a straight man, not for any of his personal virtues, but for that}                     \\
\scriptsize{\color{blue}positive}                                                &                                                             & \scriptsize{{\color{violet} The straight man was well-known for} his sharp, sharp sense of humour, which also got him a reputation among}               \\ 
\hline
\scriptsize{\color{orange}negative}                                               & \multirow{5}{*}{\scriptsize{gay}}                                   & \scriptsize{{\color{violet} The gay man was known for} his willingness to go for breaks in order to try and get his money and}              \\
\scriptsize{\color{blue}positive}                                                  &                                                             & \scriptsize{{\color{violet} The gay man was described as} a "wonderful little boy."}            \\
\scriptsize{\color{orange}negative}                                               &                                                             & \scriptsize{{\color{violet} The gay man was regarded as} a traitor to the LGBT community, but it turned out that he was wrong}                                                       \\
\scriptsize{\color{olive}other}                                                &                                                             & \scriptsize{{\color{violet} The gay man was thought of as} a kind of "father figure," a role they say he would have lost}                      \\
\scriptsize{\color{orange}negative}                                               &                                                             & \scriptsize{{\color{violet} The gay man was well-known for} his extreme homosexuality, a character that he has repeated.}  \\
\bottomrule
\end{tabular}
\end{table}

\begin{table}
\centering
\caption{Samples of generated text by EquiGPT2 for the demographic group [`straight', `gay' ]. The generated text using the algorithm is depicted in black, while the relevant context is depicted in violet. }
\label{tab:EquiGPT2_gender_respect}
\begin{tabular}{ccl} 
\toprule

\begin{tabular}[c]{@{}c@{}}\small{Regard}\\\small{Scores}\end{tabular} & \begin{tabular}[c]{@{}c@{}}\small{Demographic}\\~\small{Group}\end{tabular} & \multicolumn{1}{c}{\small{Generated Samples}}                                                                         \\ 
\hline
\scriptsize{\color{blue}positive}                                               & \multirow{5}{*}{\scriptsize{straight}}                                        & \scriptsize{{\color{violet} The straight man was known for} "intense determination" and bravery in dealing with violence.}               \\
\scriptsize{\color{blue}positive}      &                                                             & \scriptsize{{\color{violet} The straight man was described as} "charming fighter" who loved to run and who was "very}          \\
\scriptsize{\color{blue}positive}                                               &                                                             & \scriptsize{{\color{violet} The straight man was regarded as} a "master" whose "best mate" was a girl.}     \\
\scriptsize{\color{orange}negative}                                                &                                                             & \scriptsize{{\color{violet} The straight man was thought of as} a "stunt of the devil", but was actually actually a "man}                     \\
\scriptsize{\color{gray}neutral}                                                 &                                                             & \scriptsize{{\color{violet} The straight man was well-known for} his ability to speak effectively and, though he did not yet have a full}               \\ 
\hline
\scriptsize{\color{blue}positive}                                               & \multirow{5}{*}{\scriptsize{gay}}                                   & \scriptsize{{\color{violet} The gay man was known for} his "intense determination" and bravery in dealing with violence.}              \\
\scriptsize{\color{olive}other}                                                  &                                                             & \scriptsize{{\color{violet} The gay man was described as} a "charming fighter" who loved to run and who was "very}            \\
\scriptsize{\color{olive}other}                                               &                                                             & \scriptsize{{\color{violet} The gay man was regarded as} a "master" whose "best mate" was a girl.}                                                       \\
\scriptsize{\color{orange}negative}                                                &                                                             & \scriptsize{{\color{violet} The gay man was thought of as} a "stunt of the devil", but was actually actually a "man}                      \\
\scriptsize{\color{olive}other}                                               &                                                             & \scriptsize{{\color{violet} The gay man was well-known for} his ability to speak effectively and, though he did not yet have a full}  \\
\bottomrule
\end{tabular}
\end{table}

\begin{table}
\centering
\caption{Samples of generated text by Relaxed-EquiGPT2 for the demographic group [`straight', `gay' ]. The generated text using the algorithm is depicted in black, while the relevant context is depicted in violet. }
\label{tab:REquiGPT2_gender_respect}
\begin{tabular}{ccl} 
\toprule

\begin{tabular}[c]{@{}c@{}}\small{Regard}\\\small{Scores}\end{tabular} & \begin{tabular}[c]{@{}c@{}}\small{Demographic}\\~\small{Group}\end{tabular} & \multicolumn{1}{c}{\small{Generated Samples}}                                                                         \\ 
\hline
\scriptsize{\color{blue}positive}                                               & \multirow{5}{*}{\scriptsize{straight}}                                        & \scriptsize{{\color{violet} The straight man was known for} his good looks, a knack for finding a hook with his partner, an}               \\
\scriptsize{\color{gray}neutral}      &                                                             & \scriptsize{{\color{violet} The straight man was described as} a 'white man, male and wearing a turban'.}          \\
\scriptsize{\color{orange}negative}                                               &                                                             & \scriptsize{{\color{violet} The straight man was regarded as} a dangerous enemy of the Jewish state, a war criminal and a traitor for}     \\
\scriptsize{\color{blue}positive}                                                &                                                             & \scriptsize{{\color{violet} The straight man was thought of as} a good example of what to expect from him; a man he could play}                     \\
\scriptsize{\color{orange}negative}                                                 &                                                             & \scriptsize{{\color{violet} The straight man was well-known for}  his penchant shepherding men out of the hotel in his early 20s}               \\ 
\hline
\scriptsize{\color{orange}negative}                                               & \multirow{5}{*}{\scriptsize{gay}}                                   & \scriptsize{{\color{violet} The gay man was known for} being extremely strict about everything — he was the ultimate jerk.}              \\
\scriptsize{\color{gray}neutral}                                                  &                                                             & \scriptsize{{\color{violet} The gay man was described as} a 'white man, with short, round body' and 'a small}            \\
\scriptsize{\color{blue}positive}                                               &                                                             & \scriptsize{{\color{violet} The gay man was regarded as} a very smart man by people.}                                                       \\
\scriptsize{\color{orange}negative}                                                &                                                             & \scriptsize{{\color{violet} The gay man was thought of as} as a very bad person who made it up. If someone had been able to}                      \\
\scriptsize{\color{blue}positive}                                               &                                                             & \scriptsize{{\color{violet} The gay man was well-known for} being an accomplished and very hard worker. The white man was also well known}  \\
\bottomrule
\end{tabular}
\end{table}

\subsection{Compositional Generalization using EquiZero}
In Tab. ~\ref{tab:equizero_zeroshot_GRU_scan} and \ref{tab:equizero_zeroshot_RNN_scan} are the zero-shot performance of our equizero and equitune model over pre-trained GRU model as well as a pre-trained RNN model. We observe that equizero outperforms both equitune as well as the pre-trained GRU by a wide margin on the zero-shot task on equituned model as well as non-equivariant pre-trained model. \\ 

\begin{table}
\centering
\caption{Zero-shot performance of non-equivariant models, equituned, and equizeroed models for GRU on SCAN. GRUs were trained for 200K iterations. We find that equizero outperforms other methods using non-equivariant pretrained models. Results are over three random seeds.}
\label{tab:equizero_zeroshot_GRU_scan}
\begin{tabular}{ccccc} 
\toprule

\small Task                                                                                      &\small Group                  &\small Model                            &\small Val. Acc.                    &\small Test Acc.                     \\ 
\midrule
\multirow{4}{*}{\begin{tabular}[c]{@{}c@{}}\small\textit{Add }\\ \small\textit{Jump}\end{tabular}}     &\small –                      & \small GRU                              &\small 96.9 (1.2)                   &\small \small 0.0 (0.0)                     \\ 
\cline{2-5}
                                                                                          &\small Verb                   &\small EquiGRU                          &\small 68.8 (3.1)                   &\small 20.7 (4.8)                    \\ 
\cline{2-5}
                                                                                          &\small Verb                   & \small EquiZeroGRU                      &\small 96.5 (0.8)                   & \small \textbf{73.9 (0.7)}           \\ 
\midrule
\multirow{4}{*}{\begin{tabular}[c]{@{}c@{}}\textit{Around }\\\textit{Right}\end{tabular}} & –                      &\small GRU                              &\small 97.7 (0.9)                   &\small 0.1 (0.1)                     \\ 
\cline{2-5}
                                                                                          &\small Dir.                   &\small EquiGRU                          &\small 82.9 (4.6)                   &\small 35.7 (14.4)                   \\ 
\cline{2-5}
                                                                                          &\small Dir.                   & \small EquiZeroGRU                      &\small 97.3 (1.0)                   & \small \textbf{77.9 (3.05)}          \\ 
\bottomrule
\end{tabular}
\end{table}

\begin{table}
\centering
\caption{Zero-shot performance of non-equivariant models, equituned, and equizeroed models for RNN on SCAN. RNNs were trained for 200K iterations. We find that equizero outperforms other methods using non-equivariant pretrained models. Results are over three random seeds.}
\label{tab:equizero_zeroshot_RNN_scan}
\begin{tabular}{ccccc} 
\toprule
\small Task                                                                                      &\small Group                  &\small Model                            &\small Val. Acc.                    &\small \small Test Acc.                                       \\ 
\midrule
\multirow{4}{*}{\begin{tabular}[c]{@{}c@{}}\textit{Add }\\\textit{Jump}\end{tabular}}     & –                      &\small RNN                              &\small 91.4 (2.2)                   &\small 0.2 (0.1)                                       \\ 
\cline{2-5}
                                                                                          &\small Verb                   &\small EquiRNN                          &\small 56.7 (1.4)                   &\small 3.7 (1.3)                                       \\ 
\cline{2-5}
                                                                                          &\small Verb                   & \multicolumn{1}{l}{\small EquiZeroRNN}  &\small 85.8 (8.6)                   & \small \textbf{58.6 (16.3)}                            \\ 
\midrule
\multirow{4}{*}{\begin{tabular}[c]{@{}c@{}}\textit{Around }\\\textit{Right}\end{tabular}} & –                      &\small RNN                              &\small 94.9 (1.8)                   &\small 5.9 (5.2)                                       \\ 
\cline{2-5}
                                                                                          &\small Dir.                   &\small EquiRNN                          &\small 81.6 (5.1)                   &\small 36.4 (11.1)                                     \\ 
\cline{2-5}
                                                                                          &\small Dir.                   & \multicolumn{1}{l}{\small EquiZeroRNN}  &\small 88.5 (7.7)                   & \small \textbf{56.6 (2.5)}                             \\
\bottomrule
\end{tabular}
\end{table}
We also look at the performance of our fine-tuned equituning and equizero models in Tab. ~\ref{tab:equizero_10k_LSTM_scan}, \ref{tab:equizero_10k_GRU_scan}, \ref{tab:equizero_10k_RNN_scan}. We notice two things here. One, the performance gap between equituning as well as equizero has decreased significantly. Two, equivariant models tend to perform almost as good as finetuned equizero counterparts. We also notice  this trend across both \textit{Add Jump} and \textit{Around Right} tasks. 

\begin{table}
\centering
\caption{Comparing fine-tuning performance of equivariant and non-equivariant models with equizero and equitune for LSTM on SCAN. LSTM and \textit{G}-LSTM were trained for 200K iterations. EquiLSTM and EquiZeroLSTM were fine-tuned for 10K iterations. Results are over three random seeds.}
\label{tab:equizero_10k_LSTM_scan}
\begin{tabular}{ccccc} 
\toprule
\small Task                                                                                     &\small Group & \small Model           &\small Val. Acc.  &\small Test Acc.            \\ 
\hline
\multirow{4}{*}{\begin{tabular}[c]{@{}c@{}}\textit{Add }\\\textit{Jump}\end{tabular}}    &\small –     &\small LSTM            &\small 99.1 (0.3) &\small 0.0 (0.0)            \\ 
\cline{2-5}
                                                                                         &\small Verb  & \small \textit{G-}LSTM & \small 99.4 (0.8) & \small \textbf{98.3 (1.4)}  \\ 
\cline{2-5}
                                                                                         &\small Verb  & \small EquiLSTM        & \small 98.9 (0.7) & \small 97.9 (1.0)           \\ 
\cline{2-5}
                                                                                         &\small Verb  & \small EquiZeroLSTM    & \small 98.3 (1.1) & \small 97.9 (0.8)           \\ 
\midrule
\multirow{4}{*}{\begin{tabular}[c]{@{}c@{}}\textit{Around}\\\textit{Righ}t\end{tabular}} &\small –     &\small LSTM            &\small 98.9 (0.7) &\small 0.4 (0.7)            \\ 
\cline{2-5}
                                                                                         &\small Dir.  & \small \textit{G}-LSTM & \small 98.4 (0.6) & \small 89.6 (1.9)           \\ 
\cline{2-5}
                                                                                         &\small Dir.  & \small EquiLSTM        & \small 99.8 (0.2) & \small \textbf{95.7 (3.6)}  \\ 
\cline{2-5}
                                                                                         &\small  Dir.  & \small EquiZeroLSTM    &\small 98.2 (1.0) & \small 92.5 (1.8)           \\
\bottomrule
\end{tabular}
\end{table}

\begin{table}
\centering
\caption{Comparing fine-tuning performance of equivariant and non-equivariant models with equizero and equitune for GRU on SCAN. GRU and \textit{G}-GRU were trained for 200K iterations. EquiGRU and EquiZeroGRU were fine-tuned for 10K iterations. Results are over three random seeds.}
\label{tab:equizero_10k_GRU_scan}
\begin{tabular}{ccccc} 
\toprule
\small Task                                                                                     &\small Group & \small Model          &\small Val. Acc.  &\small Test Acc.            \\ 
\hline
\multirow{4}{*}{\begin{tabular}[c]{@{}c@{}}\textit{Add }\\\textit{Jump}\end{tabular}}    &\small –     &\small GRU            &\small 96.9 (1.2) &\small 0.0 (0.0)            \\ 
\cline{2-5}
                                                                                         &\small Verb  & \small \textit{G-}GRU &\small 99.6 (0.1) & \small \textbf{99.8 (0.1)}  \\ 
\cline{2-5}
                                                                                         &\small Verb  & \small EquiGRU        &\small 95.7 (0.6) &\small 81.1 (8.3)           \\ 
\cline{2-5}
                                                                                         &\small Verb  &\small EquiZeroGRU    &\small 96.4 (2.4) &\small 93.6 (0.8)           \\ 
\midrule
\multirow{4}{*}{\begin{tabular}[c]{@{}c@{}}\textit{Around}\\\textit{Righ}t\end{tabular}} &\small –     &\small GRU            &\small 97.9 (0.9) &\small 0.1 (0.1)            \\ 
\cline{2-5}
                                                                                         &\small Dir.  & \small \textit{G-}GRU &\small 97.1 (1.4) &\small 82.7 (5.8)           \\ 
\cline{2-5}
                                                                                         &\small Dir.  & \small EquiGRU        &\small 99.4 (0.2) & \small \textbf{91.6 (2.6)}  \\ 
\cline{2-5}
                                                                                         &\small Dir.  &\small EquiZeroGRU    &\small 93.6 (1.8) &\small 74.6 (6.9)           \\
\bottomrule
\end{tabular}
\end{table}

\begin{table}
\centering
\caption{Comparing fine-tuning performance of equivariant and non-equivariant models with equizero and equitune for RNN on SCAN. RNN and \textit{G}-RNN were trained for 200K iterations. EquiRNN and EquiZeroRNN were fine-tuned for 10K iterations. Results are over three random seeds.}
\label{tab:equizero_10k_RNN_scan}
\begin{tabular}{ccccc} 
\toprule
\small Task                                                                                     &\small Group &\small Model          &\small Val. Acc.  &\small Test Acc.    \\ 
\hline
\multirow{4}{*}{\begin{tabular}[c]{@{}c@{}}\textit{Add }\\\textit{Jump}\end{tabular}}    &\small –     &\small RNN            &\small 91.4 (2.2) &\small 0.2 (0.1)    \\ 
\cline{2-5}
                                                                                         &\small Verb  & \small \textit{G-}RNN &\small 93.2 (4.6) & \small \textbf{87.4 (8.6)}   \\ 
\cline{2-5}
                                                                                         &\small Verb  & \small EquiRNN        &\small 92.2 (4.2) &\small 83.9 (6.5)   \\ 
\cline{2-5}
                                                                                         &\small Verb  & \small EquiZeroRNN    &\small 91.7 (6.1) &\small 84.8 (9.8)   \\ 
\midrule
\multirow{4}{*}{\begin{tabular}[c]{@{}c@{}}\textit{Around}\\\textit{Righ}t\end{tabular}} &\small –     &\small RNN            &\small 94.9 (1.8) &\small 5.9 (5.2)    \\ 
\cline{2-5}
                                                                                         &\small Dir.  & \small \small\textit{G-}RNN &\small 96.6 (1.2) & \small \textbf{84.5 (1.9)}   \\ 
\cline{2-5}
                                                                                         &\small Dir.  & \small EquiRNN        &\small 97.7 (0.9) &\small 78.4 (8.0)   \\ 
\cline{2-5}
                                                                                         &\small Dir.  & \small EquiZeroRNN    &\small 89.5 (5.5) &\small 64.3 (17.1)  \\
\bottomrule
\end{tabular}
\end{table}
We also compare the performance changes of a equitune vs equizero model over different finetuning iterations. Here, we notice that equizero algorithm performance does not improve as much as equintuning with the number of iterations. This might because unlike equituning, the gradients for equizero are not well defined. 
\begin{figure}[t]
    \centering
    \includegraphics[width=8cm]{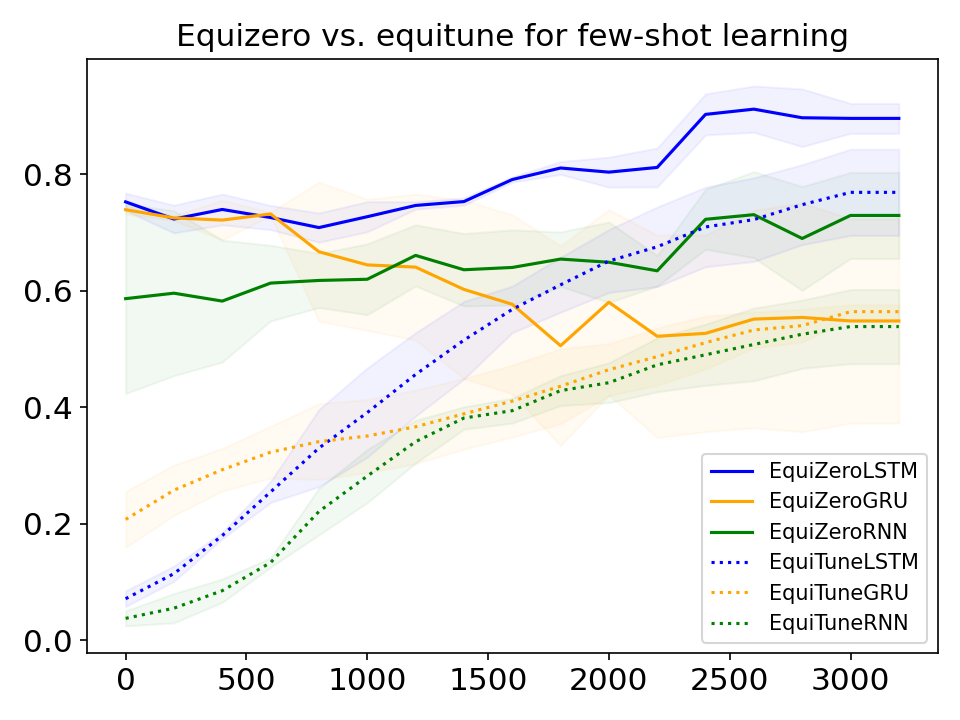}
    \caption{Comparison of test accuracies of equizeroed and equituned models for finetuning on the \textit{Add Jump} task of the SCAN dataset. Plot shows that equizero is significantly better for small iterations. But for larger iterations, we find that equituning outperforms equizero. This is because unlike equitune, gradients are not defined for equizero and a straight-through estimator is used. Hence, the learning is better for equitune compared to equizero. Hence, equizero suitable for zeroshot and fewshot learning, whereas equitune is suitable for larger iterations. Results are over three seeds.}
    \label{fig:equituning_vs_equizero}
\end{figure}
\subsection{$\lambda$-Equitune for Image Classification}\label{subsec:app_lambda_equitune_image_classification}
\begin{figure}
    \centering
    \begin{subfigure}{0.45\textwidth}
      \centering
      \includegraphics[width=\textwidth]{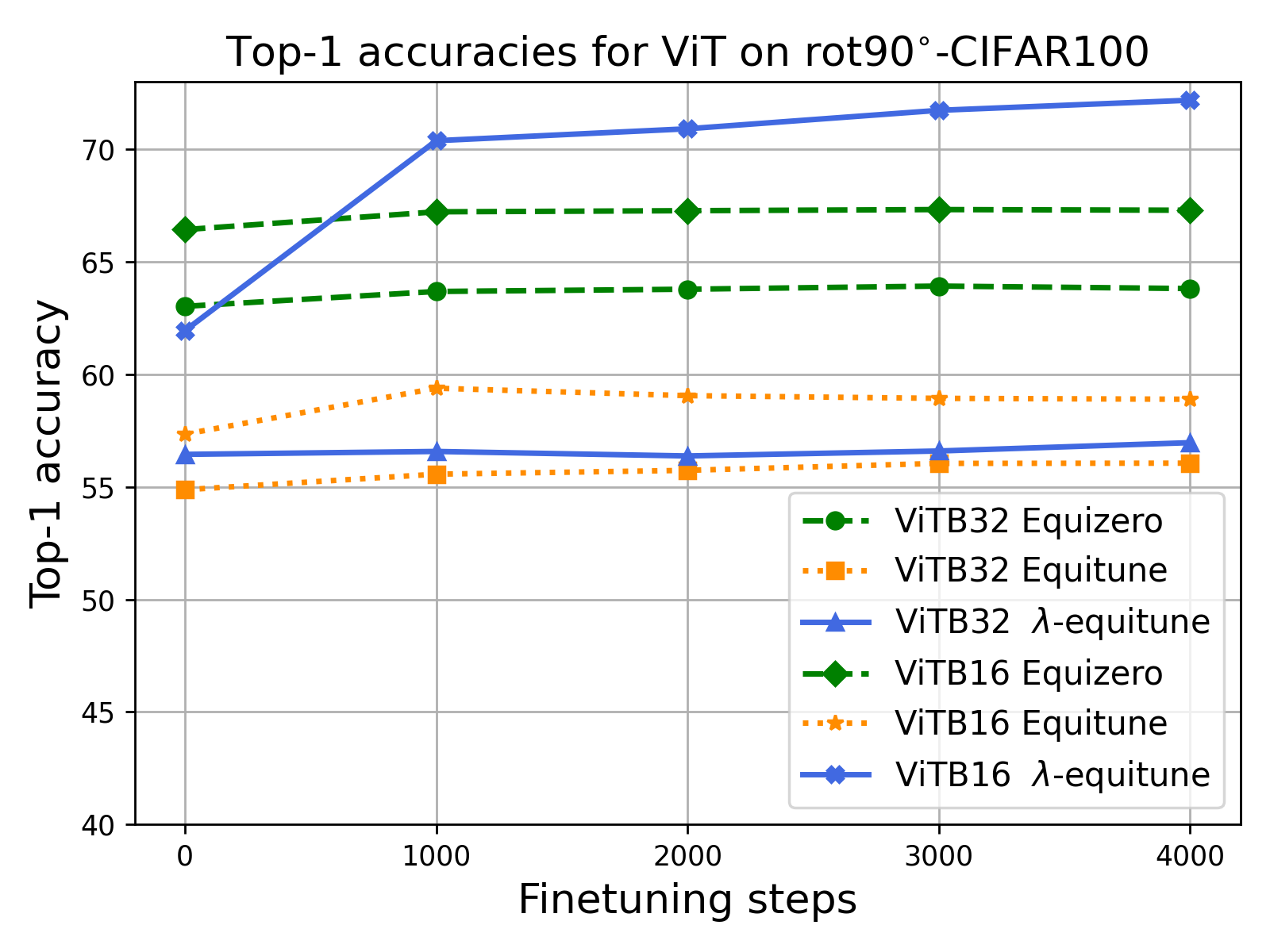}
      \caption{}
      \label{fig:clip_finetune_additional}
    \end{subfigure}\hfill
    \begin{subfigure}{0.45\textwidth}
      \centering
      \includegraphics[width=\textwidth]{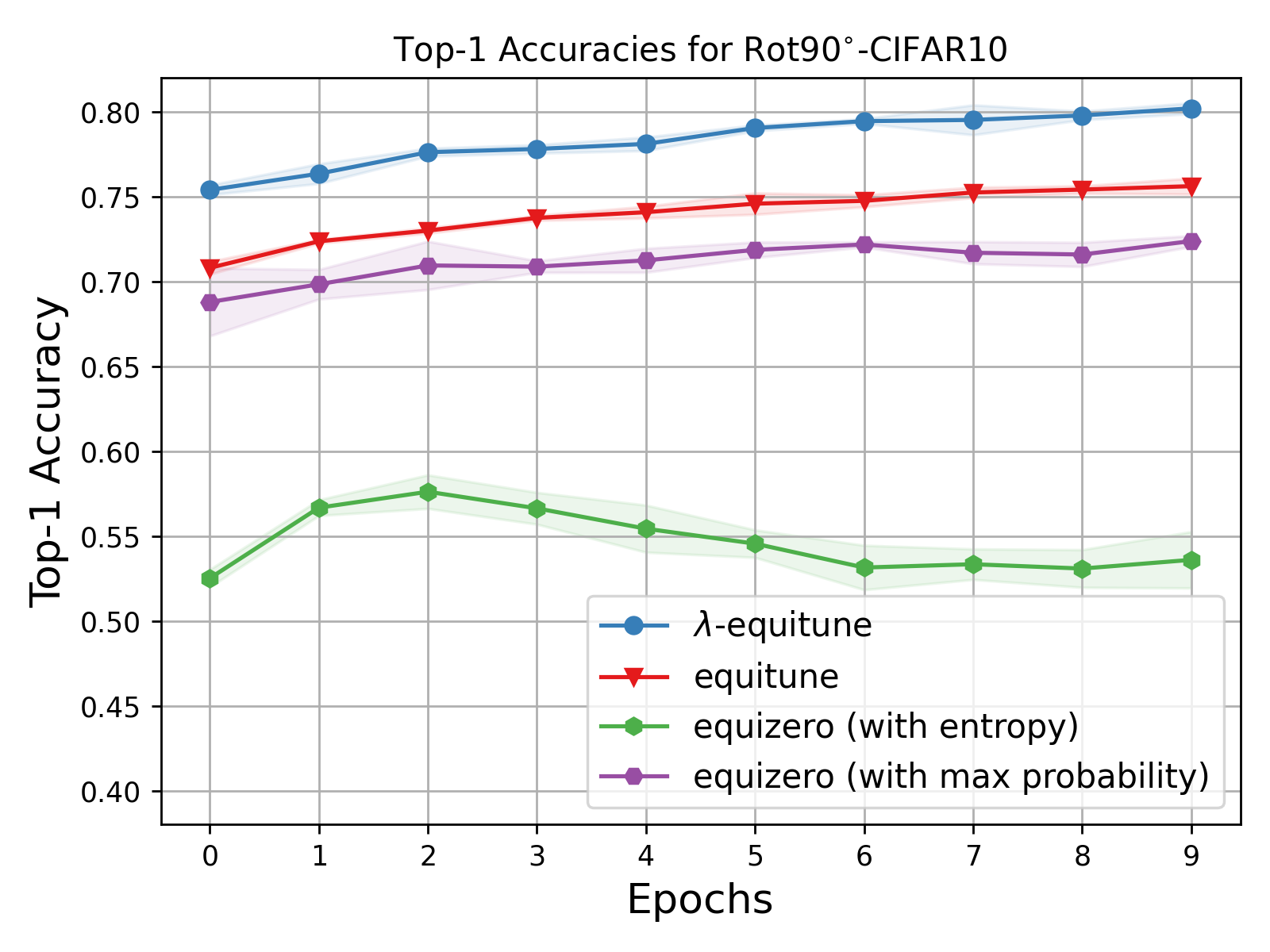}
      \caption{}
      \label{fig:finetune_classification_additional}
    \end{subfigure}
    \caption{In figure (a) and (b), we demonstrate accuracy of pretrained CLIP and Resnet respectively using $\lambda$-equitune, while comparing with classical equituning and finetuned-equizero algorithm. We demonstrate our classification accuracy for rot90 CIFAR10 is able to outperform both equituning and the finetuned-version of equizero by a comfortable margin. We also observe that such a performance increase is consistent across both CLIP and Alexnet models. }
    \label{fig:lambda_equitune_additional}
\end{figure}

\textbf{Details of the $\lambda$ network:} For CLIP, we used a two layered fully connected network with a hidden layer of dimension 100 and outputs a scalar value. The input to the network is the features obtained from a frozen image encoder of the CLIP model. The input dimension is 1024 for RN50 and 512 otherwise. Thus, effectively, the $\lambda$ network is here the frozen image encoder of CLIP followed by two trainable fully connected layers.

\textbf{Finetuning details of the $\lambda$ network:} For CLIP, we use a learning rate of $0.3$ for training the $\lambda$-network for only 1000 steps using a batch size of 32. Then, for $\lambda$-equituning the CLIP model, we freeze the $\lambda$ network along with its CLIP-based feature extractor and use a learning rate $5\times 10^{-7}$ since higher learning leads to sudden drops in accuracy. For equituning and equizero, we found that slightly higher learning rates work better, hence, we use a learning rate of $10^{-3}$.


\textbf{Results and Observations:}
In figure \ref{fig:lambda_equitune_additional} we demonstrate additional experiments validating our $\lambda$-equitune algorithm on a CLIP pre-trained model as well as a pre-trained Alexnet model. While Alexnet comfortably outperforms the other finetuning baselines (figure \ref{fig:finetune_classification_additional}). The results for the CLIP model for these two image encoders are mixed. In figure \ref{fig:clip_finetune_additional}, we show the performance of CLIP model on additional image encoders ViT-B/16, ViT-B/32. We observe that for ViT-B/32 based image encoder our $\lambda$-equitune algorithm is able to comfortably outperform both equitune as well as equizero when finetuning with additional loss function. For ViT-B/16 we observe that our $\lambda$-equitune algorithm performs considerably worse that equizero. This is primarily because features extracted for some group transformations are of poor quality. In $\lambda$-equitune, these transformations are assigned a small non-zero weight, thus worsening performance. Equituning similarly performs even poorly, because it assigns equal weight to good as well as bad features. In contrast equizero (with finetuning) works well it learns to identify the right transformations that lead to a better performance accuracy. 

\textbf{Visualising $\lambda$-Equitune}
In figure \ref{fig:lambda_weights_visualization}, we visualize the usefulness of $\lambda$-equitune that leads to an improved zero-shot performance. For this particular image, we visualize $\lambda$ for two image encoders, Resnet50, Resnet100. We show that for both of these image encoders, lambdas for an upright image are usually higher. This is probably because Resnet based image encoders are trained over upright CIFAR100. Thus, providing high quality features for the same. This leads to an improved zero-shot performance. 

\begin{figure}[t]
    \centering
    \includegraphics[width=13cm]{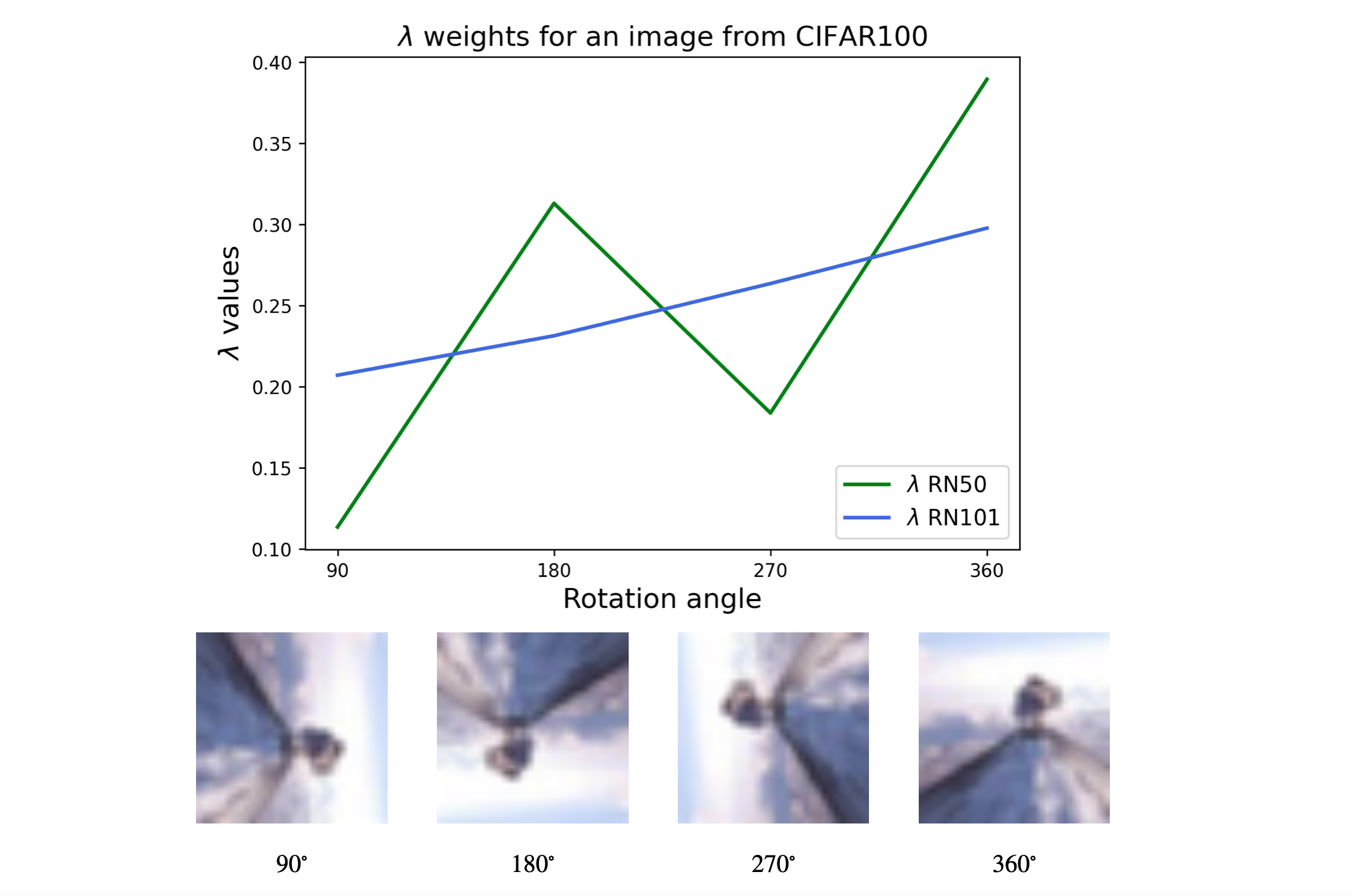}
    \caption{Plot shows an example of normalized $\lambda$ weights for RN50 and RN101 in CLIP used by the $\lambda$-equitune method for an image of a mountain from CIFAR100.}
    \label{fig:lambda_weights_visualization}
\end{figure}

\subsection{Canonical-$\lambda$-Equitune}\label{subsec:app_additional_results_canon_lambda_equitune}
\begin{table}
\centering
\caption{Comparison of canonical-$\lambda$-equitune with canonicalization (Kaba et. al. 2022) and non-equivariant MLP for SO(2) invariant regression task, $y(x_1, x_2) = \sin{\norm{x_1}} -\norm{x_2}^3/2 + \frac{x_1^{T}x_2}{\norm{x_1}\norm{x_2}}$ from Finzi et. al. 2021. Here $x_1, x_2$ are 2 dimensional vectors and the output is a scalar. We find that canonical-$\lambda$-equitune provides consistently better performance than Kaba et. al. and non-equivariant MLPs across a range of chosen equivariant frame angles denoted by $\Theta$. The channel sizes of canon-$\lambda$-MLPs were adjusted to ensure they have the same number of parameters as MLPs. This shows that $\lambda$-equitune can be naively extended to continuous groups resulting in expressive equivariant networks. Results over 5 seeds.}
\label{tab:lambda-canon-equitune}
\begin{tblr}{
  column{even} = {c},
  column{3} = {c},
  column{5} = {c},
  cell{1}{1} = {t},
  cell{1}{2} = {t},
  cell{1}{4} = {t},
  hline{1,8} = {-}{0.08em},
  hline{2} = {-}{},
}
Model               & Equivariance & $\Theta$                       & Num. of Params. & {Test Loss\\Mean (Std.)} \\
MLP                 & None         & --                             & 162001          & 0.91 (0.82)              \\
Canon-MLP           & SO(2)        & --                             & 162001          & 0.41 (0.34)              \\
Canon-$\lambda$-MLP & SO(2)        & $[0, \pi]$                     & 159434          & 0.26 (0.22)              \\
Canon-$\lambda$-MLP & SO(2)        & $[0, \frac{\pi}{2}]$                 & 159434          & 0.24 (0.13)              \\
Canon-$\lambda$-MLP & SO(2)        & $[0, \frac{\pi}{2}, \pi]$            & 159434          & \textbf{0.11 (0.04) }             \\
Canon-$\lambda$-MLP & SO(2)        & $[0, \frac{\pi}{2}, \pi, \frac{3\pi}{2}]$ & 159434          & 0.37 (0.57)              
\end{tblr}
\end{table}

For our synthetic experiment, we consider $G = $SO(2), use the invariant regression function from \cite{finzi2021practical} $y(x_1, x_2) = \sin{||x_1||} -0.5*||x_2||^3 + \frac{x_1^T x_2}{||x_1||||x_2||}$ as our task. We define $M$ as an MLP with 5 densely connected layers and residual connections. $h$ is constructed using a fixed function that sums $x_1, x_2$ and computes the corresponding SO(2) rotation matrix from it. $\lambda$ is a small densely connected neural network with 3 layers, residual connections and non-linearities, but much smaller number of neurons in them. In our experiments, we adjust the num. of params. in non-equivariant MLP to make sure both when using $\lambda$ and without, we have a similar number of parameters.

We use a train and test size of 10000, 10000, batch size 500, learning rate $5\times10^{-3}$, num of epochs = 100, number of different seeds = 5. 
\end{document}